\documentclass{article}

\usepackage{iclr2026_conference}
\newif\ifpreprint
\preprintfalse
\iclrfinalcopy

\usepackage{times}
\usepackage{inconsolata}
\usepackage[scaled=0.9]{helvet}

\usepackage{graphicx}

\usepackage{amsmath}
\usepackage{amssymb}
\usepackage{natbib}
\usepackage{cases}
\bibliographystyle{plainnat}
\usepackage{hyperref}
\definecolor{accent}{rgb}{0, 0, 0.5}
\hypersetup{colorlinks=true,citecolor=accent,linkcolor=accent,urlcolor=accent}
\usepackage{amsthm} %
\usepackage[capitalize,nameinlink]{cleveref}
\usepackage{mathtools}
\usepackage{mleftright}
\usepackage{stmaryrd} %

\usepackage{doi}

\usepackage[inline]{enumitem}
\newlist{compactenum}{enumerate}{1}
\setlist[compactenum,1]{itemsep=1pt,topsep=0pt,parsep=0pt,partopsep=0pt,label=\arabic*.,leftmargin=2em}
\newlist{compactitem}{itemize}{1}
\setlist[compactitem,1]{itemsep=1pt,topsep=0pt,parsep=0pt,partopsep=0pt,label=$\bullet$,leftmargin=2em}

\usepackage{rycolab-config}
\usepackage{colortbl}
\usepackage[dvipsnames]{xcolor}

\definecolor{ETHBlue}{RGB}{33,92,175}	%
\definecolor{ETHGreen}{RGB}{98,115,19}		%
\definecolor{ETHPurpleDark}{RGB}{140,10,89}	%
\definecolor{ETHPurple}{RGB}{163,7,116}	%
\definecolor{ETHGray}{RGB}{111,111,111}	%
\definecolor{ETHRed}{RGB}{183,53,45}	%
\definecolor{ETHPetrol}{RGB}{0,120,148}	%
\definecolor{ETHBronze}{RGB}{142,103,19}	%

\colorlet{ETHdarkblue}{ETHBlue!80!black}
\colorlet{ETHdarkgreen}{ETHGreen!80!black}
\colorlet{ETHpink}{ETHPurple}
\colorlet{ETHgray}{ETHGray}
\colorlet{ETHred}{ETHRed}
\colorlet{ETHgreenblue}{ETHPetrol}
\colorlet{ETHbrown}{ETHBronze}

\definecolor{TextBlack}{RGB}{51,51,51}
\definecolor{BackgroundWhite}{RGB}{255,255,255}

\definecolor{AccentBlue}{RGB}{0,122,204}
\definecolor{LightBlue}{RGB}{173,216,230}
\definecolor{DarkBlue}{RGB}{0,51,102}

\definecolor{AccentGreen}{RGB}{70,160,73}
\definecolor{LightGreen}{RGB}{144,238,144}
\definecolor{DarkGreen}{RGB}{0,100,0}

\definecolor{AccentRed}{RGB}{255,0,0}
\definecolor{LightRed}{RGB}{255,99,71}
\definecolor{DarkRed}{RGB}{139,0,0}

\definecolor{AccentOrange}{RGB}{255,165,0}
\definecolor{LightOrange}{RGB}{255,204,153}
\definecolor{DarkOrange}{RGB}{255,140,0}

\definecolor{NeutralLightGray}{RGB}{204,204,204}
\definecolor{NeutralMediumGray}{RGB}{102,102,102}

\definecolor{NoteYellow}{RGB}{255,255,0}

\definecolor{DiversePurple}{RGB}{128,0,128}
\definecolor{DiverseTeal}{RGB}{0,128,128}
\definecolor{DiverseOlive}{RGB}{128,128,0}
\definecolor{DiverseCyan}{RGB}{0,128,192}
\definecolor{DiverseMagenta}{RGB}{192,0,128}

\colorlet{MacroColor}{ETHPetrol}
\colorlet{MACROCOLOR}{MacroColor}

\newcommand{\mymacro}[1]{{#1}}

\newcommand{\defn}[1]{\textbf{#1}}

\newcommand{\paroutline}[3][false]{%
    \ifnum\pdfstrcmp{#1}{true}=0
        #3%
    \else
        [\textit{\textcolor{DiverseMagenta}{#2}}] \textcolor{AccentBlue}{#3}%
    \fi
}

\newcommand{\uhat}{\mymacro{\textsf{UHAT}}\xspace}
\newcommand{\uhats}{\mymacro{\textsf{UHATs}}\xspace}

\newcommand{\smat}{\mymacro{\textsf{SMAT}}\xspace}
\newcommand{\smats}{\mymacro{\textsf{SMATs}}\xspace}

\newcommand{\ind}[1]{\mathbb{I} \left\{ #1 \right\}}

\newcommand{\R}{{\mymacro{ \mathbb{R}}}}

\newcommand{\B}{{\mymacro{ \mathbb{B}}}}

\newcommand{\alphabet}{{\mymacro{ \Sigma}}}

\newcommand{\kleene}[1]{{\mymacro{#1^*}}}

\newcommand{\str}{{\mymacro{\boldsymbol{w}}}}

\newcommand{\stru}{{\mymacro{\boldsymbol{u}}}}
\newcommand{\strv}{{\mymacro{\boldsymbol{v}}}}

\newcommand{\bos}{{\mymacro{\textsc{bos}}}}
\newcommand{\eos}{{\mymacro{\textsc{eos}}}}

\newcommand{\semiring}{{\mymacro{ \mathbb{K}}}}

\newcommand{\zero}{{\mymacro{\mathbf{0}}}}
\newcommand{\one}{{\mymacro{\mathbf{1}}}}

\newcommand{\automaton}{{\mymacro{M}}}

\newcommand{\stateq}{{\mymacro{ q}}}

\newcommand{\states}{{\mymacro{ Q}}}

\newcommand{\trans}{{\mymacro{ \delta}}}

\newcommand{\weight}{{\mymacro{k}}}

\newcommand{\final}{{\mymacro{ F}}}

\newcommand{\finalf}{{\mymacro{ F}}}

\newcommand{\edge}[4]{{\mymacro{#1 \xrightarrow{#2 / #3} #4}}}

\newcommand{\negterm}[1]{{\mymacro{ {\raise.17ex\hbox{$\scriptstyle\sim$}} #1}}}

\newcommand{\initstate}{{\mymacro{\iota}}}

\newcommand{\ignore}[1]{}
\newcommand{\expandLater}[1]{}

\def\1{\mathbf{1}}

\def\rmH{{{\mymacro{ \mathbf{H}}}}}

\def\rmK{{{\mymacro{ \mathbf{K}}}}}

\def\rmQ{{{\mymacro{ \mathbf{Q}}}}}

\def\rmV{{{\mymacro{ \mathbf{V}}}}}

\newcommand{\softmax}{{\mymacro{ \mathrm{softmax}}}}

\newcommand{\bb}[1][]{\ifthenelse{\isempty{#1}}{\mymacro{\mathbf{b}}}{\mymacro{\mathbf{b}^{\text{#1}}}}}
\newcommand{\ff}[1][]{\ifthenelse{\isempty{#1}}{\mymacro{f}}{\mymacro{f_{\text{#1}}}}}

\newcommand{\W}[1][]{\ifthenelse{\isempty{#1}}{\mymacro{\mathbf{W}}}{\mymacro{\mathbf{W}^{\text{#1}}}}}

\newcommand{\stacktop}[1][]{\ifthenelse{\isempty{#1}}{\mymacro{\gamma^{\text{top}}}}{\mymacro{\gamma^{\text{top}}_{#1}}}}

\newcommand{\sym}{\mymacro{{\sigma}}}
\newcommand{\syma}{\mymacro{{a}}}

\makeatletter
\newcommand*{\bigcdot}{}%
\DeclareRobustCommand*{\bigcdot}{%
  \mathbin{\mathpalette\bigcdot@{}}%
}
\newcommand*{\bigcdot@scalefactor}{.5}
\newcommand*{\bigcdot@widthfactor}{1.15}
\newcommand*{\bigcdot@}[2]{%
  \sbox0{$#1\vcenter{}$}%
  \sbox2{$#1\cdot\m@th$}%
  \hbox to \bigcdot@widthfactor\wd2{%
    \hfil
    \raise\ht0\hbox{%
      \scalebox{\bigcdot@scalefactor}{%
        \lower\ht0\hbox{$#1\bullet\m@th$}%
      }%
    }%
    \hfil
  }%
}
\makeatother

\usepackage{stackengine}

\newcommand{\true}{\mymacro{\top}}

\newcommand{\false}{\mymacro{\bot}}

\NewDocumentCommand{\transformer}{O{} o o o }{
  \IfBlankTF{#1}{
    \IfNoValueTF{#2}{
      \IfNoValueTF{#4}{\rmH}{\rmH(#4)}
    }{
      \IfNoValueTF{#4}{\mymacro{\rmH_{#2,#3}}}{\mymacro{\rmH(#4)_{#2,#3}}}
    }
  }{
    \IfNoValueTF{#2}{
      \IfNoValueTF{#4}{\mymacro{\rmH^{(#1)}}}{\mymacro{\rmH^{(#1)}(#4)}}
    }{
      \IfNoValueTF{#4}{\mymacro{\rmH^{(#1)}_{#2,#3}}}{\mymacro{\rmH^{(#1)}(#4)_{#2,#3}}}
    }
  }
}
\NewDocumentCommand{\attention}{o}{\IfNoValueTF{#1}{\mymacro{\mathbf{A}}}{\mymacro{\mathbf{A}^{(#1)}}}}
\NewDocumentCommand{\ffn}{o}{\IfNoValueTF{#1}{\mymacro{\mathbf{F}}}{\mymacro{\mathbf{F}^{(#1)}}}}
\NewDocumentCommand{\fo}{o}{\IfNoValueTF{#1}{\mymacro{\textbf{FO}[\mathord<]}}{\mymacro{\textbf{FO}^{#1}[\mathord<]}}}
\NewDocumentCommand{\pfo}{o}{\IfNoValueTF{#1}{\mymacro{\textnormal{\textbf{PFO}}^2[\mathord<]}}{\mymacro{\textnormal{\textbf{PFO}}^2[\mathord<,#1]}}}
\NewDocumentCommand{\ffo}{o}{\IfNoValueTF{#1}{\mymacro{\textnormal{\textbf{FFO}}^2[\mathord<]}}{\mymacro{\textnormal{\textbf{FFO}}^2[\mathord<,#1]}}}

\newcommand{\TL}{\mymacro{\mathsf{TL}}}

\NewDocumentCommand{\query}{o o }{\IfNoValueTF{#1}{\rmQ}{\rmQ_{#1,#2}}}
\NewDocumentCommand{\key}{o o }{\IfNoValueTF{#1}{\rmK}{\rmK_{#1,#2}}}
\NewDocumentCommand{\val}{o o }{\IfNoValueTF{#1}{\rmV}{\rmV_{#1,#2}}}

\newcommand{\fpseries}{\mymacro{S}}

\newcommand{\bnfto}{\mathrel{::=}}
\newcommand{\bnfor}{\mathrel{\makebox[\widthof{$\bnfto$}][r]{$\mid$}}}

\newcommand{\LTL}{\mymacro{\ensuremath{\mathsf{LTL}}}}

\newcommand{\statey}{\mymacro{q}}
\newcommand{\statevec}{\mymacro{\mathbf{h}}}
\newcommand{\statescl}{\mymacro{h}}

\newcommand{\realsemiring}{\mymacro{\overline{\mathbb{R}}_{\ge0}}}

\newcommand{\strtostate}{\mymacro{{state encoder}}}
\newcommand{\strtostates}{\mymacro{{state encoders}}}

\newcommand{\Strtostatestitle}{\mymacro{State Encoders}}

\newcommand{\iffby}[1]{\mathrel{\overset{\smash{\text{\labelcref{#1}}}}{\iff}}}
\newcommand{\iffbyih}{\mathrel{\overset{\smash{\text{ind.~hyp.}}}{\iff}}}

\newcommand{\PSPACE}{\mathsf{PSPACE}}
\newcommand{\PTIME}{\mathsf{P}}
\newcommand{\NPTIME}{\mathsf{NP}}

\newcommand{\autoregressor}{\mymacro{A}}
\newcommand{\arout}{\mymacro{r}}

\newcommand{\tlS}{\mathbin{\mymacro{\mathbf{S}}}}
\newcommand{\tlY}{\mathord{\mymacro{\mathbf{Y}}}}
\newcommand{\tlH}{\mathord{\mymacro{\mathbf{H}}}}
\newcommand{\tlP}{\mathord{\mymacro{\mathbf{P}}}}

\newcommand{\sympred}[1]{#1}

\makeatletter
\newcommand{\oset}[2]{%
{\mathop{#2}\limits^{\vbox to -.5\ex@{\kern-\tw@\ex@
\hbox{\scriptsize $#1$}\vss}}}}
\makeatother

\newcommand{\countl}{\ensuremath{\oset{\leftharpoonup}{\#}}}

\newcommand{\symprob}[3]{\mymacro{\textnormal{Pr}_{#1}(#3 \mid #2)}}
\newcommand{\sufprob}[3]{\mymacro{\textnormal{Pr}_{#1}(#3 \mid #2)}}
\newcommand{\strprob}[2]{\mymacro{\textnormal{Pr}_{#1}(#2)}}

\newcommand{\OMIT}[1]{}

\newcommand{\revise}[1]{#1}

\makeatletter
\renewenvironment{proof}[1][\proofname]{\par
  \pushQED{\qed}%
  \normalfont \topsep0pt \partopsep0pt
  \trivlist
  \item[\hskip\labelsep\itshape#1\@addpunct{.}]\ignorespaces
}{%
  \popQED\endtrivlist\@endpefalse
}
\makeatother

\title{Probability Distributions Computed \\ by Autoregressive Transformers}

\author{Andy Yang$^1$\quad Anej Svete$^2$\quad Jiaoda Li$^2$\quad Anthony Widjaja Lin$^{3,4}$\quad Jonathan Rawski$^5$ \\ \textbf{Ryan Cotterell}$^2$\quad\textbf{David Chiang}$^1$ \\[1ex] $^1$University of Notre Dame, USA \quad $^2$ETH Zürich, Switzerland \\ $^3$Max-Planck Institute for Software Systems,  Germany \\
$^4$University of Kaiserslautern-Landau, Germany \quad $^5$San José State University,  USA}
\date{July 2025}

\begin{document}

\maketitle

\ifpreprint \lhead{Preprint} \fi

\begin{abstract}
Most expressivity results for transformers treat them as language recognizers---devices that accept or reject strings---rather than as they are used in practice: as language models that generate strings autoregressively and probabilistically. We characterize the probability distributions that transformer language models can express. We show that making transformer language recognizers autoregressive can sometimes increase their expressivity, and that making them probabilistic can break equivalences that hold in the non-probabilistic case. Our overall contribution is to tease apart what functions transformers are capable of expressing in their most common use case as language models.
\end{abstract}

\section{Introduction}

Most work studying transformer expressivity, that is, what classes of computations transformers can perform, treats them as \emph{language recognizers}, where the input is a string and the output is a binary classification: true if the string is accepted and false otherwise \citep{strobl-etal-2024-survey}. 
However, the most common practical use of transformers is as \emph{language models}, which differ in two ways: first, the input is a prefix of a string, and the output is a prediction of the next symbol; second, the prediction is a probability distribution rather than a binary decision.
Such probability distributions, when estimated from large text corpora, have enabled a wide range of applications in natural language processing and beyond.
This paper focuses on a fundamental question: which previous findings on transformer expressivity carry over from the language recognition setting to the language modeling setting? 
On one hand, positive answers validate the utility of previous results on language recognition when studying language models.
On the other, negative answers further underscore the necessity of studying language models \textit{qua} language models.

In order to develop a formal theory of transformers as language models, 
we introduce two distinctions: \emph{unweighted} (or equivalently, \emph{Boolean-weighted}) versus \emph{real-weighted} computation and 
\emph{classifiers}, which map a complete string to a value, versus \emph{autoregressors}, which map each prefix to a distribution over the next token.
This four-way distinction is visualized in \Cref{fig:results}.
Using this terminology, most theoretical work on transformer expressivity \citep[e.g.][]{yang-etal-2024-masked,jerad-etal-2025-unique} focuses on Boolean-weighted classifiers, while practical applications use transformers as real-weighted autoregressors. 
This work investigates whether established expressivity results remain valid when moving from Boolean-weighted to real-weighted transformers, and from classifier settings to autoregressive ones.

We answer these questions for several variants of transformers (see \cref{fig:results}).
\Citet{yang-etal-2024-masked} proved that strictly-masked rightmost unique-hard attention transformers (\uhats), as Boolean classifiers, recognize the same languages as linear temporal logic (\LTL{}) and counter-free automata.
\Citet{jerad-etal-2025-unique} proved that \emph{leftmost} \uhats, as Boolean classifiers, recognize the same languages as a fragment of \LTL{}, called in our notation $\TL[\tlP]$. %
\citet{li-cotterell-2025-characterizing} proved that softmax attention transformers (\smats) with fixed precision, as Boolean classifiers, recognize the same class of languages.
These results carry over easily to real classifiers (\cref{cor:same_models}), with the caveat that there are two commonly-used weighted analogues of 
counter-free automata, deterministic and nondeterministic.
We show that surprisingly, these two diverge in the real-weighted setting,
despite being equivalent in the Boolean setting.
With real weights, \uhats{} are only equivalent to  counter-free DFAs.

This caveat notwithstanding, we may use $\LTL$ and $\mathsf{TL}[\tlP]$ to draw conclusions about the transformer variants listed above. First, real classifiers define some weighted languages that real autoregressors do not, and vice versa (\cref{thm:real_classifier_autoregressor}).  
To pinpoint more precisely how classifiers and autoregressors differ, we turn to Boolean weights. Here, $\uhat$ classifiers and autoregressors are equivalent (\cref{thm:ltl_equiv_altl}). 
But for leftmost \uhats{} and fixed-precision \smats, autoregressors are strictly more expressive than classifiers (\cref{thm:tlp_classifier_autoregressor}).

Similarly, \citet{yang-etal-2025-knee} considered \smats with fixed precision but arbitrary precision inside attention. As Boolean classifiers, such transformers are exactly equivalent to a temporal logic extended with counting operators. But here we show that, as autoregressors, they become slightly more powerful (for a given fixed depth).

Our results largely validate existing results on expressivity of transformers as language recognizers. In many cases, they allow us to transfer results on language recognizers to language models. For example, since \uhats{} as language recognizers cannot recognize PARITY \citep{hahn-2020-theoretical}, neither can \uhats{} as language models. But our results also give good reasons to be cautious about presuming that results on language recognizers directly apply to language models as well.

In \cref{sec:preliminaries}, we define notational preliminaries.
We then (\cref{sec:transformers}) define the classes of transformers we consider and how they can be used as classifiers and as autoregressors. We do this by introducing the notion of a \defn{state encoder},
and in \cref{sec:formalisms} show how two other formalisms, deterministic finite automata (DFAs) and linear temporal logic ($\LTL$), can also be seen as state encoders and therefore can be used as classifiers and autoregressors.
Then (\cref{sec:expressivity}), using \LTL{}, we investigate the expressive power of transformers as both classifiers and autoregressors, yielding the results shown in 
\cref{fig:results}.

\begin{figure}
\centering\small
\tikzset{>=stealth}
\tikzset{edgelabel/.style={auto=left,sloped,font={\tiny},inner sep=0mm,outer sep=0mm}}
\begin{tikzpicture}[x=3.5cm,y=1.75cm]
\tikzset{every node/.style={text width=2.75cm,align=center,anchor=center,execute at begin node={\strut}}}
\node at (0,2.5) {\textit{Boolean classifiers}};
\node(ltlbc) at (0,1) {rightmost \uhats = $\mathsf{LTL}$};
\node(tlpbc) at (0,0) {leftmost \uhats = fixed-precision \smats = $\mathsf{TL[\tlP]}$};
\node[text width=4cm] at (1,2.5) {\textit{Boolean autoregressors}};
\node(ltlba) at (1,1) {rightmost \uhats = $\mathsf{LTL}$};
\node(tlpba) at (1,0) {leftmost \uhats = fixed-precision \smats = $\mathsf{TL[\tlP]}$};
\draw[<->] (ltlbc) edge node[edgelabel]{\cref{thm:ltl_equiv_altl}} (ltlba);
\draw[->] (tlpbc)--(ltlbc);
\draw[->] (tlpba)--(ltlba);
\draw[->] (tlpbc) edge node[edgelabel] {\cref{thm:tlp_classifier_autoregressor}} (tlpba);
\draw[gray!80,xshift=2mm] (1.5,-0.25) -- (1.5,2.5);
\node at (2,2.5) {\textit{Real classifiers}};
\node(ltlrc) at (2,1) {rightmost \uhats = 
\LTL{} = \mbox{cfDFAs} = cfNFAs};
\node(tlprc) at (2,0) {leftmost \uhats = fixed-precision \smats = $\mathsf{TL[\tlP]}$};
\node[text width=4cm] at (3,2.5) {\textit{Real autoregressors}};
\node(nfa) at (3,2) {cfNFAs};
\node(ltlra) at (3,1) {rightmost \uhats = \LTL = cfDFAs};
\node(tlpra) at (3,0) {leftmost \uhats = fixed-precision \smats = $\mathsf{TL[\tlP]}$};
\draw[draw=none] (tlprc) to node[midway] {\scalebox{2}[0.75]{$\neq$}} node[edgelabel] {\cref{thm:real_classifier_autoregressor}} (tlpra);
\draw[draw=none] (ltlrc) to node[midway] {\scalebox{2}[0.75]{$\neq$}} node[edgelabel] {\cref{thm:real_classifier_autoregressor}} (ltlra);
\draw[->] (tlprc)--(ltlrc);
\draw[->] (tlpra)--(ltlra);
\draw[->] (ltlra) edge node[edgelabel] {\cref{thm:cfdfa_to_cfnfa}} (nfa);
\end{tikzpicture}
\DeclareRobustCommand{\rarr}{\tikz[baseline=-0.8mm]{\draw[->] (0,0)--(3mm,0);}}
\DeclareRobustCommand{\harr}{\tikz[baseline=-0.8mm]{\draw[<->] (0,0)--(4mm,0);}}
\caption{In the Boolean semiring, equivalences from the literature \citep{yang-etal-2024-masked,jerad-etal-2025-unique,yang-etal-2025-knee} carry over from classifiers to autoregressors; however, sometimes autoregressors are more expressive than classifiers. 
In the real semiring, \LTL{} and counter-free DFAs and NFAs become less expressive than counter-free NFAs, and rightmost \uhats are only as expressive as the former. 
Key: \rarr{} strict inclusion, \harr{} equivalence.
$\neq$ incomparable.} \label{fig:results}
\end{figure}

\section{Preliminaries}
\label{sec:preliminaries}

Throughout this paper, we work with weighted languages. We define some key concepts here, but for a more detailed introduction, see the handbook chapter by \citet{droste-kuich-2009}.

Let $\alphabet$ be an \defn{alphabet}, that is, a finite, non-empty set of \defn{symbols}, and let $\kleene{\alphabet}$ 
be the set of strings over $\alphabet$.
We often augment $\alphabet$ with start and end symbols $\bos$ and $\eos$, but never consider $\bos$ or $\eos$ to belong to $\alphabet$. 
For any string $\str=w_1\cdots w_n$, we write the length of $\str$ as $|\str|=n$. We write $\str_{<i} = w_1 \cdots w_{i-1}$ and $\str_{\le i} = w_1 \cdots w_i$. We write $\epsilon$ for the empty string.

We think of weights and probabilities as elements of \defn{semirings}, an abstraction of the usual addition and multiplication operations that allows results and algorithms to apply generically to multiple settings.
A semiring $\semiring$ has an addition operation $\oplus$, additive identity $\zero$, multiplication operation $\otimes$, and multiplicative identity $\one$. The two semirings we focus on in this paper are the \defn{(extended nonnegative) real semiring} $\realsemiring$, which contains all nonnegative real numbers and $+\infty$, and in which $\oplus$ and $\otimes$ are real addition and multiplication; and the \defn{Boolean semiring} $\B$, in which $\oplus$ is disjunction ($\lor$), $\zero$ is false ($\false$), $\otimes$ is conjunction ($\land$), and $\one$ is true ($\true$).

A \defn{weighted language} (also called a \defn{formal power series}) is a function $\fpseries \colon \kleene{\alphabet} \rightarrow \semiring$.
When $\semiring$ is complete, that is, when $\semiring$ is closed under infinite summations, as $\realsemiring$ and $\B$ are, we call a weighted language \defn{normalized} if
$\sum_{\str \in \kleene{\alphabet}} \fpseries(\str) = \one$.

For sets $X$ and $Y$, we write $Y^X$ for the set of functions from $X$ to $Y$, and $2^X$ for %
the power set of~$X$. For any proposition $\phi$, we write $\ind{\phi}$ to be $1$ if $\phi$ is true and $0$ if $\phi$ is false.

\section{Transformer Language Models}
\label{sec:transformers}

In this section, we recall the definition of transformers that we will use throughout most of this paper. We also distinguish between two ways that transformers (and other formalisms) can be used to define weighted languages.

\subsection{Unique Hard Attention Transformers}

Following \citet{yang-etal-2024-masked}, we use \defn{unique-hard attention transformers} (\uhats), specifically, with rightmost-hard attention, strict future masking, and no position embeddings. We give a definition of strictly masked rightmost-hard attention here; for a definition of the rest of the network, see, for example, the survey by \citet{strobl-etal-2024-survey}.

\newcommand{\dkey}{{d_{\textrm{k}}}}

The attention function receives a sequence of query vectors $\mathbf{q}^{(i)} \in \R^\dkey$, key vectors $\mathbf{k}^{(j)} \in \R^\dkey$, and value vectors $\mathbf{v}^{(j)} \in \R^d$, all of which are column vectors, for $i, j \in [n]$.
At each position $i$, it computes a sequence of vectors
\begin{align}
\textnormal{Att}\left((\mathbf{q}^{(i)})_{i \in [n]}, (\mathbf{k}^{(j)})_{j \in [n]}, (\mathbf{v}^{(j)})_{j \in [n]}\right) &= (\mathbf{c}^{(i)})_{i \in [n]}
\end{align}
where
\begin{align*}
a_i(j) &= \mathbf{q}^{(i)} \cdot \mathbf{k}^{(j)} && \text{is an attention score for each position $j$,}\\
a^*_i &= \max_{j < i} a_i(j) && \text{is the maximum attention score,} \\
j_i &= \max \{ j < i \mid a_i(j) = a^*_i \} && \text{is the rightmost maximum-scoring position, and} \\
\mathbf{c}^{(i)} &= \begin{cases}
\mathbf{v}^{(j_i)} & \text{if $i>0$} \\
\mathbf{0} & \text{if $i=0$}
\end{cases} && \text{is the attention output.}
\end{align*}

\revise{A transformer $\mathcal{T}$ consists of a word embedding $\textnormal{Emb}\colon \alphabet\to \R^d$, followed by a composition of attention functions and feed-forward networks, allowing Layernorm but no using positional encodings.
We defer the definitions because they are standard, and we primarily will refer to the expressive equivalence of these transformers and different formal logics as shown by \citet{yang-etal-2024-masked,jerad-etal-2025-unique,li-cotterell-2025-characterizing}.
Given an input string  $\str=w_1\cdots w_n$, a transformer $\mathcal{T}$ 
prepends a symbol $w_0 = \bos$ and computes a sequence of states $\mathcal{T}(\str)= (\statevec^{(0)},\ldots,\statevec^{(n)})$, where $\statevec^{(i)} \in \R^d$ is the state after reading $w_i$.}
There are at least two ways to use $\mathcal{T}$ to define a weighted language, which we describe below.\footnote{A third intermediate way would be to multiply the weights at each position like an autoregressive model, but not to pass the output symbol at each position autoregressively to the input at the next position. %
Although interesting in its own right, it has not, to our knowledge, been used with any neural sequence models, and we do not explore this style of model here.}

\subsection{Classifiers}

The first way that a transformer can define a weighted language is as a \defn{classifier}. 

\begin{definition} \label{def:uhat_classifier}
A \uhat{} \defn{classifier} is a pair $C = (\mathcal{T}, c)$, where $\mathcal{T} \colon \Sigma^* \to (\R^d)^*$ is a $\uhat$ and $c \colon \R^d \to \semiring$ outputs a scalar weight at the last position only:
\begin{equation}
    C(\str) = c(\mathcal{T}(\str)_n).
\end{equation}
\end{definition}
For the Boolean semiring $(\mathbb{K}=\B)$, we accept a string iff the transformer outputs $\true$ at the last position. For example, the output function could be $c(\mathbf{y}) = \ind{\mathbf{w}\cdot\mathbf{y} + b \ge 0}$, where $\mathbf{w} \in \R^d$ and $b \in \R$ are parameters. This is the setup used for binary classification with a transformer encoder \citep{devlin2019bert} and in most theoretical papers on transformer expressivity.

\subsection{Autoregressive Models}

The second way for a transformer to define a weighted language is as an \defn{autoregressive model}, or an \defn{autoregressor} for short (by analogy with \emph{classifier}). An autoregressor pairs a $\uhat$ encoder with an output function $a\colon\mathbb{R}^d\to \semiring^{\alphabet\cup\{\eos\}}$, which outputs at each position a weight distribution for the next symbol, including $\eos$. In the real semiring ($\mathbb{K} = \realsemiring$), a typical example of such an output function is $\arout(\statevec) = \softmax (\mathbf{W}\statevec + \mathbf{b})$.

To line up with the more familiar notation of conditional probability distributions, we write, for all $\sym\in\alphabet\cup\{\eos\}$,
\begin{align} 
\symprob{\autoregressor}{\str_{\le i}}{\sym} &= \arout(\mathcal{T}(\str)_i)(\sym). \label{eq:symprob}
\end{align}
This is well-defined because $\mathcal{T}(\str)_i$ depends only on $\str_{\le i}$, that is, $\str_{\le i} = \str'_{\le i} \iff \mathcal{T}(\str)_i = \mathcal{T}(\str')_i$.
As suggested by this notation, we want 
$\symprob{\autoregressor}{\stru}{\mathord\cdot}$ to be a probability distribution over $\alphabet\cup\{\eos\}$. But we impose a stronger condition.  First, we extend $\symprob{\autoregressor}{\stru}{\sym}$ to the probability distribution of suffixes given (possibly empty) prefixes:
\begin{align}
\sufprob{\autoregressor}{\stru}{\strv}&=\left(\bigotimes_{i=1}^{|\strv|}\symprob{\autoregressor}{\stru \strv_{<i}}{v_{i}}\right)\otimes\symprob{\autoregressor}{\stru\strv}{\eos} \label{eq:sufprob} \\
\strprob{\autoregressor}{\str}&=\sufprob{\autoregressor}{\epsilon}{\str}.
\end{align}
Then we require that every such distribution sums to one:
\begin{definition} \label{def:uhat_autoregressor}
A \uhat{} \defn{autoregressor} over a complete semiring $\semiring$ is a pair $\autoregressor = (\mathcal{T}, \arout)$, where $\mathcal{T} \colon \Sigma^* \to (\R^d)^*$ is a $\uhat$, and $\arout \colon \R^d \to \semiring^{\alphabet\cup\{\eos\}}$ is a function such that for all $\stru \in \alphabet^*$, using the notation of \cref{eq:symprob,eq:sufprob},
\begin{equation}
\bigoplus_{\strv \in \alphabet^*} \sufprob{\autoregressor}{\stru}{\strv} = \one. \label{eq:suffix_probability}
\end{equation}
\end{definition}

This implies that:
\begin{itemize}
\item An autoregressor generates strings symbol by symbol. That is, for all prefixes $\stru$, %
\begin{equation}
\bigoplus_{\sym \in \alphabet\cup\{\eos\}} \symprob{\autoregressor}{\stru}{\sym} = \one. \label{eq:sym_probability}
\end{equation}
\item An autoregressor does not have any dead ends or endless loops. That is, for all prefixes $\stru$,
\begin{align}
\bigotimes_{i=1}^n \symprob{\autoregressor}{\stru_{<i}}{u_i} \ne \zero &\implies  \strprob{\autoregressor}{\stru\strv} \ne \zero~\text{for some suffix $\strv$.}
\end{align}
\item An autoregressor defines a normalized weighted language.
\end{itemize}

\section{Other Formalisms}
\label{sec:formalisms}

We can analogously use other formalisms to define classifier or autoregressive models. We generalize from transformers to other formalisms by means of the following notion.
\begin{definition}
A \defn{\strtostate} is a function that sends a string $w_1 \cdots w_n$ to a sequence of states $\statey_0, \ldots, \statey_n \in Q$ (where $Q$ is a finite or infinite set of states) such that $q_i$ depends only on $\str_{\le i}$. 
\end{definition}
Like transformers, any \strtostate{} can be equipped with an output function $c \colon Q \to \semiring$ to give a classifier model (as in \cref{def:uhat_classifier}) or $\arout \colon Q \to \semiring^{\alphabet\cup\{\eos\}}$ (as in \cref{def:uhat_autoregressor}) to give an autoregressor.\looseness=-1

\subsection{Finite Automata}
\label{sec:automata}

We give brief definitions of weighted and counter-free deterministic finite automata. For a fuller treatment, please see the handbook chapter by \citet{DK21} and the monograph by \citet{mcnaughtonpapert1971}.

\begin{definition}
A \defn{deterministic finite automaton} (DFA) is a tuple $\automaton=(\alphabet, Q, \trans, \initstate)$, where
    \begin{compactitem}
        \item $\alphabet$ is an alphabet,
        \item $\states$ is a finite set of \defn{states},
        \item $\trans \colon \states \times \alphabet \to \states$ is a \defn{transition function}, and 
        \item $\initstate \in \states$ is the \defn{initial} state.
    \end{compactitem}
    We extend $\trans$ to a mapping $\trans^*\colon Q\times \alphabet^*\to Q$ such that:
    \begin{equation} \begin{aligned}
        \trans^*(q,\epsilon)&=q\\
        \trans^*(q, \sym \str)&= \trans^*(\trans(q,\sym), \str).
    \end{aligned} \end{equation}
\end{definition}
A DFA $\automaton$ defines a \strtostate{}
\begin{align}
\automaton\colon \alphabet^*&\to Q^* \notag \\
    \automaton(\str)_i & = \begin{cases}
        \initstate & i=0\\
        \trans^*(\initstate, w_1 \cdots w_i)&  0<i\leq n.
    \end{cases}
\end{align}
A DFA with classifier outputs in the Boolean semiring is the same as the standard definition of a DFA: the states that output $\true$ are the accept states, and the states that output $\false$ are the reject states.
A DFA with autoregressive outputs in the real semiring is the same as the standard definition of a weighted DFA: when it is in state $\stateq$, the next input symbol $\sym$ determines both the next state $\trans(\stateq, \sym)$ as well as the symbol weight $\arout(\stateq)(\sym)$. Moreover, each state has an accepting weight $\arout(\stateq)(\eos)$.

In this paper, we are only interested in the following subclass of finite automata called \defn{counter-free automata}, which we abbreviate as cfDFAs.
\begin{definition}
     We say that a DFA with transition function $\trans$ is \defn{counter-free} if there exists some $k$ such that for all states $q$ and all strings~$\str$, we have $\trans^*(q,\str^{k})=\trans^*(q,\str^{k+1})$. 
\end{definition}
Examples of counter-free and non-counter-free DFAs are shown in \cref{fig:automata-examples}ab.

\begin{figure}\centering
\tikzset{state/.append style={minimum size=8mm,inner sep=1mm}}
\begin{tabular}{ccc}
\begin{tikzpicture}[x=2cm,baseline=0cm]
\node (q1) [initial,state] at (0,0) {$q_1$};
\node (q2) [state] at (1,0) {$q_2$};
\draw[transition] (q1) edge[bend left] node[auto] {$a$} (q2);
\draw[transition] (q2) edge[bend left] node[auto] {$b$} (q1);
\end{tikzpicture} &
\begin{tikzpicture}[x=2cm,baseline=0cm]
\node (q1) [initial,state] at (0,0) {$q_1$};
\node (q2) [state] at (1,0) {$q_2$};
\draw[transition] (q1) edge[bend left] node[auto] {$a$} (q2);
\draw[transition] (q2) edge[bend left] node[auto] {$a$} (q1);
\end{tikzpicture}
&
    \begin{tikzpicture}[baseline=0cm]
        \node[state, initial] (q0) {$\stateq_0/0$};
        \node[state, accepting] (q1) [above right = 0.25cm and 1.5cm of q0] {$\stateq_1/\frac{1}{2}$};
        \node[state, accepting] (q2) [below right = 0.25cm and 1.5cm of q0] {$\stateq_2/\frac{1}{4}$};
        \draw[transition] (q0) edge[auto] node{$a/\frac{1}{2}$} (q1)
        (q0) edge[auto=right] node{$a/\frac{1}{2}$} (q2)
        (q1) edge[auto, loop right] node{$\syma/\frac{1}{2}$} (q1)
        (q2) edge[auto, loop right] node{$\syma/\frac{3}{4}$} (q2);
    \end{tikzpicture} \\
(a) & (b) & (c)    
\end{tabular}
\caption{(a) A DFA that is counter-free (with $k=2$). (b) A DFA that is not counter-free, because for all $k$, the strings $a^k$ and $a^{k+1}$ have opposite actions. (c) A counter-free weighted NFA that has no equivalent weighted DFA (\cref{thm:cfdfa_to_cfnfa}).}
\label{fig:automata-examples}
\end{figure}

Counter-free \defn{nondeterministic finite automata} (NFAs), in which a state can have more than one outgoing transition with the same symbol are equivalent to counter-free DFAs \citep{mcnaughtonpapert1971}.
For a definition and proof of equivalence, please see \cref{sec:nfa_proof}.

\subsection{Linear Temporal Logic}

We give a brief definition of linear temporal logic and its fragments. For a fuller treatment, please see the article by \citet{sep-temporal-logic}.

\begin{definition}
The formulas of past $\LTL$ are defined by the grammar
\begin{align*}
    \phi &\bnfto  \lnot \phi_1 \mid \phi_1\land \phi_2 \\
    &\bnfor\sympred{\sym} &&\sym \in \alphabet \\
    &\bnfor\sympred{\bos} && \text{Beginning of string} \\
    &\bnfor \tlY \phi_1 && \text{Yesterday} \\
    &\bnfor \tlH \phi_1 && \text{Historically} \\
    &\bnfor \phi_1 \tlS \phi_2 && \text{Since}
\end{align*}
Formulas $\true$ (true), $\false$ (false), $\phi_1 \lor \phi_2$, $\phi_1 \leftrightarrow \phi_2$, and so on, can be defined as syntactic sugar in terms of the above. The temporal operator $\tlP \phi$ (which holds iff $\phi$ was Previously true at some time) can be defined as $\tlH \phi = \neg (\tlP (\neg \phi))$.

The semantics of formulas is given by the relation $\str, i \models \phi$ (``$\str$ satisfies $\phi$ at position $i$''), defined as follows:
\begin{subequations}
\begin{alignat}{2}
    &\str, i \models \lnot \phi_1 &\iff& \str, i \not\models \phi_1 \label{eq:models_not}\\
    &\str, i \models \phi_1 \land \phi_2 &\iff& \text{$\str, i \models \phi_1$ and $\str, i \models \phi_2$} \label{eq:models_and} \\
    &\str, i \models \sympred{\bos} &\iff& i = 0 \label{eq:models_bos} \\
    &\str, i \models \sympred{\sym} &\iff& w_i = \sym \label{eq:models_sym} \\
    &\str, i \models \tlY \phi_1 &\iff& \text{$i>0$ and $\str, i-1 \models \phi_1$} \label{eq:models_Y}\\
    &\str, i \models \tlH \phi_1 &\iff& \text{$\str, j \models \phi_1$ for all $j\le i$} \label{eq:models_H}\\
    &\str, i \models \phi_1 \tlS \phi_2 &\iff& \text{($\str,j \models \phi_2$ for some $j \le i$) and ($\str,j' \models \phi_1$ for all $j<j'\le i$).} \label{eq:models_S}
\end{alignat}
\end{subequations}
We write $\str \models \phi$ as shorthand for $\str, |\str| \models \phi$. 
\end{definition}

For any set of operators $\mathcal{O} \subseteq \{\mathord{\tlY}, 
\mathord{\tlH}, \mathord{\tlS}\}$, we write $\TL[\mathcal{O}]$ for the set of formulas using only operators in $\mathcal{O}$. Thus $\text{past $\LTL$} = \TL[\mathord{\tlY},\mathord{\tlS}]$.
Given a tuple of formulas $\Phi=(\phi_1, \ldots, \phi_m)$, we can define a \strtostate{} (typically we will use $\Phi$ to refer to both the state encoder and tuple of formulas to make their connection more obvious)
\begin{align}
    \Phi\colon \alphabet^*&\to (\B^{m})^* \notag \\
    \Phi(\str)_i &= (\ind{\str, i \models \phi_1}, \ldots \ind{\str, i \models \phi_m}). \label{eq:ltl_state_encoder}
\end{align}
We typically write $(\Phi,\arout)$ for the autoregressor using the state encoder induced by $\Phi$.

\Citet{droste-gastin-2019-aperiodic} define a weighted first-order logic, with several variations corresponding to several subclasses of weighted counter-free automata. \Citet{mandrali2013characterizations,mandrali2015weighted} do the same for \LTL{}. 
Both of these logics have, roughly speaking, four layers: (1) a core Boolean logic, (2) weights conditioned on formulas, (3) products over positions, and (4) addition and sums over positions.
This is similar to our framework, which has (1) a core Boolean logic, (2) classifier output functions that can choose weights conditioned on formulas, and (3) autoregressive output functions that can also compute products over positions.

\section{Expressivity Results}
\label{sec:expressivity}

Previous results have shown that \uhats, $\LTL$, and cfDFAs are equivalent in terms of language recognition. 
In \cref{sec:strtostates_expressivity}, we use the results to show that these formalisms are also equivalent as weighted classifiers and as autoregressors.

Next, we compare the expressivity of classifier versus autoregressive models.
Given the equivalence of the above formalisms, we will mainly discuss $\LTL$.
In the real semiring (\cref{sec:real}), \LTL{} classifiers define exactly the aperiodic step functions (defined below), which are less expressive than \LTL{} autoregressors. And \LTL{} autoregressors, in turn, are equivalent to counter-free DFA autoregressors and less expressive than weighted counter-free NFAs.

In the Boolean semiring, \LTL{} classifiers and autoregressors are equivalent, which is the main result of \cref{sec:autoregressor_expressivity}. 
However, when we consider fragments of \LTL, this equivalence breaks down, and autoregressors may become more expressive than classifiers (\cref{sec:ltl_fragments}). Similarly, in the temporal logic with counting $\TL[\countl]$ and the programming language C-RASP \citep{yang2024counting}, autoregressors are more expressive than classifiers (\cref{sec:crasp}).

\subsection{\Strtostatestitle{}}
\label{sec:strtostates_expressivity}

We say that two \strtostates{} $\tau_1\colon\alphabet^*\to Q_1^*$ and $\tau_{2}\colon\alphabet^*\to Q_2^*$ are \defn{equivalent} if there is a bijection $f\colon Q_1\to Q_2$ such that for all $\str\in \alphabet^*$, $f(\tau_1(\str))=\tau_2(\str)$.

\begin{restatable}{theorem}{stateEquivalence}\label{thm:same_states}
\uhats, $\LTL$, and cfDFAs define equivalent \strtostates{}. 
\end{restatable}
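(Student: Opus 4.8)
The plan is to reduce the claimed equivalence to a statement about the information each formalism exposes at a single position, and then to exhibit a cycle of constructions, each preserving that information. Since $\tau(\str)_i$ depends only on $\str_{\le i}$, a state encoder $\tau\colon\alphabet^*\to Q^*$ is completely determined by the map $g\colon\alphabet^*\to Q$, $g(\stru)=\tau(\stru)_{|\stru|}$, via $\tau(\str)_i=g(\str_{\le i})$; and two encoders are equivalent exactly when their maps induce the same partition of $\alphabet^*$ into the finitely many reachable ``state-reaching'' languages $L_q=g^{-1}(q)$. So it suffices to show that \uhats, $\LTL$, and cfDFAs define the same finite partitions of $\alphabet^*$ into counter-free languages, which I would do by closing the cycle cfDFA $\Rightarrow\LTL\Rightarrow\uhat\Rightarrow$ cfDFA.

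First I would do the easy direction, cfDFA $\Rightarrow\LTL$. For a cfDFA each state-reaching language $L_q=\{\stru : \trans^*(\initstate,\stru)=q\}$ is recognized by the automaton with $q$ as its unique accepting state, hence is counter-free, and by the assumed language-level equivalence there is a past-$\LTL$ formula $\phi_q$ with $\stru\models\phi_q\iff\stru\in L_q$. The key enabling lemma is the prefix property of past $\LTL$: since every operator ($\tlY$, $\tlH$, $\tlS$) refers only to positions $\le i$, we have $\str,i\models\phi\iff\str_{\le i}\models\phi$. Hence the tuple $\Phi=(\phi_q)_{q\in Q}$ satisfies $\Phi(\str)_i=$ the one-hot encoding of $\trans^*(\initstate,\str_{\le i})$, so $\Phi$ and the DFA induce literally the same partition.

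Next, $\LTL\Rightarrow\uhat$. Given $\Phi=(\phi_1,\dots,\phi_m)$, I would invoke the known $\LTL\to\uhat$ simulation, but track that it computes, in a dedicated coordinate, the value $\ind{\str,i\models\phi_k}$ at \emph{every} position $i$, not merely at the last. Each operator is realizable by one strictly-masked rightmost-hard attention layer: $\tlY$ reads the immediately preceding position, and $\tlH$, $\tlS$ are implemented by attending to the most recent position witnessing a relevant change. Because these identities hold position-wise, the resulting \uhat{} reproduces $\Phi$ position-by-position and thus induces the same partition.

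Finally I would close the cycle with $\uhat\Rightarrow$ cfDFA, which I expect to be the main obstacle. The goal is to show that a strictly-masked rightmost \uhat{} induces a finite partition that is a right congruence with aperiodic transition monoid, i.e.\ a counter-free DFA. Finiteness of the reachable state set and counter-freeness of each state-reaching language follow from the assumed classifier-level equivalence; the delicate point is that a transformer recomputes over the entire prefix rather than updating a bounded state, so it is not a priori clear that its per-position information forms a right congruence at all. The plan is to argue that, for these restricted \uhats, the finitely many reachable states transform consistently under appending a symbol---equivalently, that the syntactic monoid of the induced partition is finite and aperiodic---so that $g$ genuinely is a counter-free DFA. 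One concrete handle is to simulate the \uhat{} by an automaton that tracks the truth values of the finitely many $\LTL$ formulas it realizes, using the one-step update rules for $\tlY$, $\tlH$, and $\tlS$, and then to verify aperiodicity of the resulting product; the same subformula-tracking construction also supplies a direct $\LTL\Rightarrow$ cfDFA argument, with the caveat that a raw tuple need not be closed under these updates (e.g.\ $\Phi=(\tlY\sympred{\syma})$ is not), so the genuine technical work lies in controlling the passage to the subformula closure while keeping the monoid aperiodic.
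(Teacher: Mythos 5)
Your first two legs (cfDFA $\Rightarrow$ \LTL{} and \LTL{} $\Rightarrow$ \uhat) are essentially the paper's own constructions: one formula $\phi_q$ per state via the classical equivalences \citep{schutzenberger:1965,mcnaughtonpapert1971,kamp:1968}, the position-wise prefix property of past \LTL{} (which the paper leaves implicit), and the per-formula transformer construction of \citet{yang-etal-2024-masked} with parallel composition. The genuine gap is your third leg. Under the notion of equivalence you extracted from the paper---identical partitions of $\kleene{\alphabet}$ induced by $g(\stru)=\tau(\stru)_{|\stru|}$---the goal of that leg is not merely ``delicate'' but false: a \uhat{} (or \LTL{}) \strtostate{} need not induce a right congruence, so no DFA whatsoever, counter-free or otherwise, defines the same partition. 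Your own example $\Phi=(\tlY\sympred{\syma})$ is already the counterexample: $\syma$ and $\symb$ lie in the same block (the formula is false at both), yet appending $\symb$ sends them to different blocks ($\syma\symb$ satisfies $\tlY\sympred{\syma}$ while $\symb\symb$ does not), and every DFA's state-reaching partition \emph{is} a right congruence. Moreover, the paper's \LTL-to-\uhat{} direction produces a \uhat{} realizing exactly this encoder, so the obstruction applies to \uhats{} and not just to raw \LTL{} tuples. Consequently the subformula-tracking automaton you propose (essentially the construction the paper uses for a different purpose in \cref{thm:ltl_prefix}) yields a strictly \emph{finer} partition---a refinement, not an equivalent encoder---and no amount of ``controlling the passage to the subformula closure'' can recover equality. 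Your cycle therefore cannot close.

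The paper avoids this dead end by proving pairwise simulations, choosing for each pair the direction in which an exact correspondence on reachable states is available: \uhat{} $\to$ \LTL{} and cfDFA $\to$ \LTL{} use one formula per state, \LTL{} $\to$ \uhat{} makes the transformer's states literally the Boolean vectors, and only \LTL{} $\to$ cfDFA resorts to the Cartesian product of per-formula automata, which merely refines the \LTL{} encoder. Your difficulty thus exposes a real looseness---under the paper's literal bijection-based definition of equivalence, that last direction of the paper's proof falls short as well, and what both proofs can actually deliver is mutual refinement/simulation rather than partition equality---but that weaker statement is all \cref{cor:same_models} needs, since output functions pull back along the refining map. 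The paper's pairwise architecture confines the refinement to the one direction where it is unavoidable and harmless, whereas your cyclic architecture makes exact equality load-bearing precisely where it is impossible. To repair your proof, either weaken ``same partition'' to ``is refined by'' in the \uhat{} $\Rightarrow$ cfDFA leg (accepting that the conclusion is mutual simulation, not literal equality), or adopt the paper's pairwise structure. A final minor point: proving aperiodicity of the subformula-tracking product directly, as you suggest, amounts to re-deriving a piece of Kamp's theorem that the paper simply cites.
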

\begin{proof}
See \cref{sec:same_states_proof}. The proof is an adaptation of existing results \citep{yang-etal-2024-masked,schutzenberger:1965,mcnaughtonpapert1971,kamp:1968} connecting \uhats, $\LTL$ and cfDFAs as language recognizers.
\end{proof}

The following is an immediate consequence of \cref{thm:same_states} and the definitions of classifier and autoregressive models.

\begin{corollary}\label{cor:same_models}
\uhats, \LTL{}, and cfDFAs as classifier models define the same weighted languages. Similarly when they are used as autoregressive models. 
\end{corollary}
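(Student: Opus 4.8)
The plan is to read the result off \cref{thm:same_states}: equivalence of state encoders says that on every input the two formalisms produce the same sequence of states up to a fixed relabeling, so transporting an output function through that relabeling yields models computing identical weighted languages. It suffices to treat two formalisms at a time — say with state encoders $\tau_1 \colon \alphabet^* \to Q_1^*$ and $\tau_2 \colon \alphabet^* \to Q_2^*$ — since equality of the resulting weighted-language classes is transitive across all three. By \cref{thm:same_states} there is a bijection $f$ between their reachable state sets with $f(\tau_1(\str)_i) = \tau_2(\str)_i$ for all $\str$ and $i$; because output functions are only consulted on reachable states (any extension off the image is irrelevant), I may freely use $\finv$ on the image of $\tau_2$.

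For classifiers, given $(\tau_1, c)$ I would set $c' \defeq c \circ \finv$. Then, writing $n = |\str|$,
\begin{equation*}
(\tau_2, c')(\str) = c'(\tau_2(\str)_n) = c\bigl(\finv(f(\tau_1(\str)_n))\bigr) = c(\tau_1(\str)_n) = (\tau_1, c)(\str),
\end{equation*}
so the two classifiers define the same weighted language; the reverse direction uses $c = c' \circ f$. No realizability step is needed, because in every formalism the output function is an arbitrary map out of the (finite or $\R^d$-valued) state set, so $c'$ and $c$ are always legal.

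For autoregressors, I would transport the output function the same way, $\arout' \defeq \arout \circ \finv$, giving $\autoregressor = (\tau_1, \arout)$ and $\autoregressor' = (\tau_2, \arout')$. The crucial point is that their per-symbol weights agree: for every prefix $\str_{\le i}$ and symbol $\sym \in \alphabet \cup \{\eos\}$,
\begin{equation*}
\symprob{\autoregressor'}{\str_{\le i}}{\sym} = \arout'(\tau_2(\str)_i)(\sym) = \arout(\tau_1(\str)_i)(\sym) = \symprob{\autoregressor}{\str_{\le i}}{\sym}.
\end{equation*}
Since the suffix and string weights of \cref{eq:sufprob} are assembled solely from these per-symbol weights, they coincide for $\autoregressor$ and $\autoregressor'$ as well. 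Hence the normalization requirement \cref{eq:suffix_probability} holds for $\autoregressor'$ exactly when it holds for $\autoregressor$, so $\arout'$ is a legitimate autoregressor output function and the two models define the same normalized weighted language.

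The argument is short precisely because \cref{thm:same_states} has already done the combinatorial work of matching up states. The one place warranting care — and the closest thing to an obstacle — is confirming that passing an autoregressor output function through the state bijection preserves the summation constraint \cref{eq:suffix_probability}; this is immediate once one observes that the constraint is a function of the per-symbol weights alone, and those are invariant under the relabeling.
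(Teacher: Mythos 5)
Your proof is correct and takes essentially the same approach as the paper: the paper's own (two-sentence) proof simply invokes \cref{thm:same_states} and observes that output functions can be transported through the state bijection, which is exactly what you spell out with $c' = c \circ \finv$ and $\arout' = \arout \circ \finv$. Your explicit verification that the normalization constraint \cref{eq:suffix_probability} is preserved under the relabeling (because it depends only on the per-symbol weights) is a detail the paper leaves implicit, but it is the same argument, just made careful.
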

\begin{proof}
    By the previous theorem, all these formalisms define equivalent \strtostates{}. 
    Therefore there exist output functions with which they define the same weighted languages.
\end{proof}

\subsection{Real classifiers and autoregressors} \label{sec:real}

In this section, we consider weights in the real semiring. We characterize what weighted languages can be expressed, first by real classifiers, then by real autoregressors.

\begin{definition}
An \defn{aperiodic step function} \citep{droste-gastin-2008} is a weighted language $S \colon \alphabet^* \to \mathbb{K}$ such that $S(\str) = \bigoplus_{i=1}^m \weight_i \otimes \ind{\str \in L_i}$ where $k_1, \ldots, k_m \in \mathbb{K}$ are constants and $L_1, \ldots, L_m$ are aperiodic, that is, counter-free, regular languages.
\end{definition}

\begin{proposition} \label{thm:classifier}
An \LTL{} classifier defines the aperiodic step functions.
\end{proposition}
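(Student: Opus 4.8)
The plan is to prove the two inclusions separately, reducing both of them to the Boolean equivalence between $\LTL{}$ and counter-free automata already recorded in \cref{cor:same_models}. The single fact I would extract from that corollary, specialized to the Boolean semiring, is that a language $L \subseteq \kleene{\alphabet}$ is counter-free if and only if $L = \{\str : \str \models \phi\}$ for some $\LTL{}$ formula $\phi$. The ``only if'' direction uses that $\LTL{}$ is closed under Boolean combinations, so that the accepted language of any Boolean $\LTL{}$ classifier is again $\{\str : \str \models \phi\}$ for a single formula.

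For the forward inclusion (every $\LTL{}$ classifier is an aperiodic step function), let $C = (\Phi, c)$ with $\Phi = (\phi_1, \ldots, \phi_m)$ and $c \colon \B^m \to \realsemiring$. The key observation is that the state space $\B^m$ is finite. For each $\vb \in \B^m$ I would set $L_{\vb} = \{\str : \Phi(\str)_{|\str|} = \vb\}$, which is exactly the language of the formula $\bigwedge_{j : \vb_j = 1} \phi_j \wedge \bigwedge_{j : \vb_j = 0} \neg \phi_j$ and is therefore counter-free. Since $\Phi(\str)_{|\str|}$ equals exactly one $\vb$, we obtain $C(\str) = \bigoplus_{\vb \in \B^m} c(\vb) \otimes \ind{\str \in L_{\vb}}$, which is an aperiodic step function with weights $\weight_{\vb} = c(\vb)$.

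For the reverse inclusion, I would start from an aperiodic step function $S(\str) = \bigoplus_{i=1}^{m} \weight_i \otimes \ind{\str \in L_i}$ with each $L_i$ counter-free. For each $i$ choose an $\LTL{}$ formula $\phi_i$ with $L_i = \{\str : \str \models \phi_i\}$, set $\Phi = (\phi_1, \ldots, \phi_m)$, and define the finite-domain output function $c(\vb) = \bigoplus_{i : \vb_i = 1} \weight_i$. Then $c(\Phi(\str)_{|\str|}) = \bigoplus_{i : \str \models \phi_i} \weight_i = S(\str)$, so $(\Phi, c)$ is an $\LTL{}$ classifier computing $S$.

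I expect no deep obstacle, since all of the genuine difficulty is hidden in the Boolean $\LTL{}$/counter-free correspondence that \cref{cor:same_models} (via \cref{thm:same_states}) already supplies. The only points requiring care are bookkeeping: checking that a conjunction of formulas and their negations is again an $\LTL{}$ formula (immediate, as the grammar includes $\neg$ and $\land$) and hence defines a counter-free language, and exploiting the finiteness of $\B^m$ so that $c$ can be specified pointwise and rewritten as a finite $\oplus$-combination of indicators. The disjointness of the $L_{\vb}$ in the forward direction versus the possibly overlapping $L_i$ in the reverse direction causes no issue, since the definition of an aperiodic step function permits an arbitrary finite family of aperiodic languages.
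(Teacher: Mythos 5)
Your proposal is correct and matches the paper's own proof essentially step for step: the forward direction partitions strings by their state vector $\statevec \in \B^m$ using the conjunction $\bigwedge_i (\phi_i \leftrightarrow \statescl_i)$ (exactly the paper's $\phi_\statevec$), and the reverse direction picks formulas $\phi_i$ for the $L_i$ and reads off the output function, with your $c(\vb) = \bigoplus_{i : \vb_i = 1} \weight_i$ being the same function as the paper's $c(\statevec) = \bigoplus_{i=1}^m \weight_i \otimes \statescl_i$. Both arguments rest on the same Boolean \LTL{}/counter-free correspondence, so there is nothing to add.
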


\begin{proof}
Given any aperiodic step function as defined above, we can write, for each $L_i$, an \LTL{} formula $\phi_i$. Then we can write a classifier output function $c(\statevec) = \bigoplus_{i=1}^m \weight_i \otimes \statescl_i$. 

Conversely, given an \LTL{} classifier consisting of a tuple of formulas $(\phi_1, \ldots, \phi_m)$ and an output function $c(\statevec)$, 
for every $\statevec \in \B^{[m]}$, write the formula $\phi_\statevec = \bigwedge_{i=1}^m (\phi_i \leftrightarrow \statescl_i)$. For every $\statevec$, let $L_\statevec$ be the language defined by $\phi_\statevec$.
Then the weighted language can be written as the step function
$S(\str) = \bigoplus_{\statevec \in \B^{[m]}} c(\statevec) \otimes \ind{\str \in L_\statevec}.$
\end{proof}

The following easy corollary of \cref{thm:classifier} shows that autoregressors and classifiers are incomparable. It makes use of weighted regular expressions \citep{Sakarovitch2009}, in which the expression $\sigma$ (for any $\sigma \in \Sigma$) matches symbol $\sigma$ with weight $\one$, while the expression $k$ (for any $k \in \mathbb{K}$) matches $\epsilon$ with weight $k$.
\begin{corollary} \label{thm:real_classifier_autoregressor}
In the real semiring: (a) The weighted language $(\frac12 a)^*$ is expressible by an $\LTL$ autoregressor, but not by any $\LTL$ classifier. 
(b) The language $(1a)^*$ is expressible by an $\LTL$ classifier but not any $\LTL$ autoregressor.

Both (a) and (b) hold with $\LTL$ replaced by $\mathsf{TL}[\tlH]$.
\end{corollary}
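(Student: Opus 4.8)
The plan is to treat the corollary as four separate assertions—two positive (constructibility) and two negative (impossibility)—and to observe that the two example languages fail the two defining restrictions of classifiers and autoregressors in complementary ways. The only tools needed are \cref{thm:classifier} (an $\LTL$ classifier computes exactly an aperiodic step function, a \emph{finite} combination of weighted indicators) and the normalization requirement \cref{eq:suffix_probability} built into the definition of an autoregressor.

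For the positive autoregressor claim I would exhibit an explicit memoryless $\TL[\tlH]$ autoregressor over $\alphabet = \{a\}$: take any $\TL[\tlH]$ state encoder (e.g.\ the empty tuple of formulas, giving a single constant state) together with the constant output function $\arout$ with $\arout(\cdot)(a) = \frac12$ and $\arout(\cdot)(\eos) = \frac12$. Normalization \cref{eq:sym_probability} holds at every prefix since $\frac12 + \frac12 = 1$, so this is a valid autoregressor, and unrolling \cref{eq:sufprob} gives $\strprob{\autoregressor}{a^n} = (\frac12)^n \cdot \frac12$, the normalized geometric distribution denoted by $(\frac12 a)^*$. The one point requiring care here is reconciling the weighted-regex notation with the distribution the model actually produces: the terminating $\eos$ contributes the extra factor of $\frac12$ that makes the language sum to one.

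For both impossibility claims I would argue directly from the finiteness and normalization restrictions, which keeps them robust across $\LTL$ and its fragment $\TL[\tlH]$. Any classifier $C=(\Phi,c)$ built from a tuple of $m$ formulas has at most $2^m$ reachable states in $\B^m$, so $C$ takes only finitely many distinct weight values (equivalently, by \cref{thm:classifier}, it is an aperiodic step function); since $(\frac12 a)^*$ assigns the pairwise-distinct values $(\frac12)^{n+1}$ to the strings $a^n$, it takes infinitely many values and is therefore inexpressible by any such classifier. Symmetrically, $(1a)^*$ assigns weight $1$ to every $a^n$, so $\sum_{n\ge0} 1 = +\infty \ne 1$; since every autoregressor defines a normalized weighted language via \cref{eq:suffix_probability}, no autoregressor can express it.

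The remaining positive claim is immediate: $(1a)^*$ is the constant weighted language $\str\mapsto 1\otimes\ind{\str\in\kleene{\alphabet}}$, a single-term aperiodic step function, so by \cref{thm:classifier} it is realized by the $\TL[\tlH]$ classifier with constant output $c(\cdot)=1$ (and hence by an $\LTL$ classifier). I expect no serious obstacle in any single step; the only subtleties are bookkeeping the $\eos$ factor in the autoregressor so that the realized distribution is genuinely normalized while still being recognizable as $(\frac12 a)^*$, and phrasing the two impossibility arguments at the right level of generality so that they cover the fragment $\TL[\tlH]$ as well as full $\LTL$.
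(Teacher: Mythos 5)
Your proposal is correct and follows essentially the same route as the paper's proof: the classifier impossibility for $(\frac12 a)^*$ comes from the fact that classifiers compute aperiodic step functions and hence take only finitely many weight values, the autoregressor impossibility for $(1a)^*$ comes from the normalization requirement \cref{eq:suffix_probability}, and the two positive directions are the easy explicit constructions. The only difference is one of detail: you spell out the geometric autoregressor (constant output $\arout(\cdot)(a)=\arout(\cdot)(\eos)=\frac12$) and the $\eos$ bookkeeping that reconciles the regex notation with a normalized distribution, whereas the paper compresses this into ``it is easy to write.''
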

\begin{proof}
The first language has an infinite number of string weights, but an aperiodic step function can only output a finite number of different weights. 
On the other hand, it is easy to write an  $\LTL$ (or $\TL[\tlH]$) autoregressor to recognize this.
The second language can easily be expressed by a classifier assigning weight $1$ to every string of zero or more $a$'s, but is not expressible by any autoregressor because it is not a normalized weighted language.
\end{proof}

As real autoregressors, $\LTL$ formulas are equivalent to counter-free DFAs by \cref{cor:same_models}. However, there are several nonequivalent weighted analogues of counter-free automata \citep{droste-gastin-2008}, and $\LTL$ and $\uhat$ autoregressors are only equivalent to the least powerful of these. In particular, both are less expressive than weighted counter-free NFAs.

\begin{proposition} \label{thm:cfdfa_to_cfnfa}
Weighted counter-free NFAs define more weighted languages than counter-free DFA autoregressors do.
\end{proposition}
\begin{proof}
See \cref{sec:nfa_proof}.
\Cref{fig:automata-examples}c shows an example of a counter-free weighted NFA that is not determinizable.
\end{proof}

\subsection{Boolean classifiers and autoregressors} \label{sec:boolean}

To examine more carefully how autoregressors add expressivity, we turn to the Boolean semiring.
We will see that $\LTL$ classifiers and $\LTL$ autoregressors are equivalent, but with an important caveat: with certain fragments and extension of $\LTL$ that use only a subset of the temporal operators, autoregressors can be more expressive than classifiers. These variants of $\LTL$ are particularly interesting because they have been proven to be equivalent to variants of transformers.

\subsubsection{$\LTL$}
\label{sec:autoregressor_expressivity}

\newcommand{\nextsym}[1]{\mathrm{next}_{#1}}
\newcommand{\pref}{\mathrm{prefix}}

In the Boolean semiring, $\LTL$ classifiers and autoregressors are equivalent, but the conversion from an autoregressor to a classifier uses the $\tlY$ and $\tlH$ operators.

\begin{restatable}{theorem}{ltlEquivaltl}\label{thm:ltl_equiv_altl}
    For any set of operators $\mathcal{O} \subseteq \{\mathord{\tlY}, 
        \mathord{\tlH}, \mathord{\tlS}\}$:
        
        \begin{compactenum}[label=(\alph*)]
            \item\label{thm:ltl_to_altl} 
            For any nonempty language $L$ defined by a Boolean-weighted $\TL[\mathcal{O}]$ classifier, there exists a Boolean-weighted $\TL[\mathcal{O}]$ autoregressor defining the same language $L$.
            \item\label{thm:altl_to_ltl} For any language $L$ defined by a Boolean-weighted $\TL[\mathcal{O}]$ autoregressor,  there exists a Boolean-weighted $\TL[\mathcal{O} \cup \{\tlY,\tlH\}]$ classifier defining the same language $L$.
        \end{compactenum}
        
\end{restatable}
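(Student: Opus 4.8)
The plan is to prove the two directions separately. Part~(b), converting an autoregressor into a classifier, is the mechanical direction and I would dispatch it first. Write the autoregressor as a $\TL[\mathcal{O}]$ formula tuple $\Phi = (\phi_1,\dots,\phi_m)$ with output function $\arout\colon\B^m\to\B^{\alphabet\cup\{\eos\}}$; a string $\str=w_1\cdots w_n$ is generated exactly when $\arout(\Phi(\str)_{i-1})(w_i)=\true$ for every $i$ and $\arout(\Phi(\str)_n)(\eos)=\true$. I would encode the per-step condition with the formula
\[
\psi \;=\; \sympred{\bos} \;\lor\!\! \bigvee_{\substack{\statevec\in\B^m,\ \sym\in\alphabet\\ \arout(\statevec)(\sym)=\true}} \Bigl(\sympred{\sym} \land \bigwedge_{j=1}^m \bigl(\tlY\phi_j \leftrightarrow \statescl_j\bigr)\Bigr),
\]
where at each position $i\ge 1$ the $\tlY$ operator reads off the previous state $\Phi(\str)_{i-1}$ and $\sympred{\sym}$ reads off $w_i$, while the $\sympred{\bos}$ disjunct makes $\psi$ hold vacuously at position $0$. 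Then $\tlH\psi$, evaluated at the final position, asserts that the transition condition holds everywhere; conjoining it with $\chi=\bigvee_{\statevec:\,\arout(\statevec)(\eos)=\true}\bigwedge_j(\phi_j\leftrightarrow\statescl_j)$, which checks that the last state admits $\eos$, gives the classifier formula $\tlH\psi\land\chi$. Because $\chi$ uses only the $\phi_j$ and $\psi$ only their $\tlY$-images, the sole new operators are $\tlY$ and $\tlH$, so this is a $\TL[\mathcal{O}\cup\{\tlY,\tlH\}]$ classifier; the correctness check (including $\str=\epsilon$) is then a direct unwinding of the semantics.

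For part~(a) I would first collapse the classifier's output function into a single $\TL[\mathcal{O}]$ formula $\phi=\bigvee_{\statevec:\,c(\statevec)=\true}\bigwedge_j(\phi_j\leftrightarrow\statescl_j)$, so $L=\{\str:\str\models\phi\}$. Closing the formula set under subformulas yields a tuple whose truth-value vectors evolve deterministically from one position to the next---by the semantics of $\tlY$, $\tlH$, $\tlS$ and the Boolean connectives each new value is a fixed function of the previous vector and the current symbol---so the state encoder is a deterministic automaton $M=(\alphabet,\states,\trans,\initstate)$ whose states are still computed by $\TL[\mathcal{O}]$ formulas, with $L=\{\str:\trans^*(\initstate,\str)\in F\}$ and $F=\{q:q\models\phi\}$. (This $M$ is in fact counter-free, consistently with \cref{thm:same_states}, but I only need determinism.) Let $\mathrm{Coacc}=\{q:\trans^*(q,\strv)\in F \text{ for some }\strv\}$ be the co-accessible states. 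I would define the output function by: for $q\in\mathrm{Coacc}$, set $\arout(q)(\eos)=\ind{q\in F}$ and $\arout(q)(\sym)=\ind{\trans(q,\sym)\in\mathrm{Coacc}}$; and for $q\notin\mathrm{Coacc}$, set $\arout(q)(\eos)=\true$ and $\arout(q)(\sym)=\false$ for all $\sym$.

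The verification splits into correctness of the language and normalization. If $\str\in L$ its final state lies in $F\subseteq\mathrm{Coacc}$ and every prefix state is co-accessible, so every symbol transition and the final $\eos$ are permitted; conversely, acceptance forces the final state into $F$. If $\str\notin L$, then either the final state is in $\mathrm{Coacc}\setminus F$, so $\eos$ is blocked, or (since $\initstate\in\mathrm{Coacc}$) the run first exits $\mathrm{Coacc}$ at some step taken from a co-accessible state, so that symbol is blocked; either way $\str$ receives weight $\false$. For normalization, from any co-accessible state a shortest path to $F$ stays inside $\mathrm{Coacc}$ and is therefore permitted, and from a non-co-accessible state the $\eos$ self-loop makes the empty suffix valid. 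The nonemptiness hypothesis enters exactly here, to guarantee $\initstate\in\mathrm{Coacc}$; it is necessary because an autoregressor can never define the empty language.

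The main obstacle is part~(a), and specifically reconciling normalization with classifier languages having \emph{dead} prefixes---prefixes with no continuation in $L$, as for $L=\{\syma\}$ over $\{\syma,\symb\}$. The naive autoregressor that merely gates $\eos$ on $\phi$ and permits all symbols fails normalization at such prefixes, since they admit no accepted continuation. The fix is the asymmetric co-accessibility gating above: instead of demanding that every prefix be live, I block the transitions \emph{into} non-co-accessible states, so a dead prefix already carries weight $\false$ in the language yet still supports a locally normalized conditional distribution through its $\eos$ self-loop. Verifying that this bookkeeping produces exactly $L$ while keeping every state's conditional distribution normalized---and confirming that the output function depends only on the current $\TL[\mathcal{O}]$ state---is the delicate part of the argument.
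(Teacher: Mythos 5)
Your proof is correct, and it reaches the theorem by the same underlying mechanism as the paper, but with a different decomposition in part~(a) that is worth spelling out. In part~(b) you do exactly what the paper does---$\tlY$ reads off the previous state, $\tlH$ universally quantifies the per-position transition check, so only $\tlY$ and $\tlH$ are added---and your bookkeeping is in fact tighter at the boundaries: your $\sympred{\bos}$ disjunct makes the check vacuous at position $0$, and your separate conjunct $\chi$ tests the $\eos$ condition at the last position, two details that the paper's displayed formula $\tlH\bigl(\bigvee_{\sym}(\tlY\phi_\sym)\land\sympred{\sym}\bigr)$ glosses over (as written, its inner disjunction is false at position $0$, and its $\eos$ disjunct can never fire inside a string). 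In part~(a) the paper factors the construction into two syntactic lemmas---the prefix transformation of \cref{thm:ltl_prefix} and the right-derivative transformation of \cref{thm:ltl_next}---so that the autoregressor's state formulas become ``derivative of prefix of $\phi$'' for each $\sym$ plus $\phi$ itself, and the output function is a bare projection (with the all-false state permitting $\eos$). You instead stop at the subformula DFA, which is precisely the automaton the paper builds inside the proof of \cref{thm:ltl_prefix}, keep the subformula tuple as the state encoder, and move all of the logic into the output function: gate $\sym$ on co-accessibility of $\trans(q,\sym)$, gate $\eos$ on $q\in F$, and let dead states emit only $\eos$---the same device the paper uses to preserve \cref{eq:suffix_probability}. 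This shortcut eliminates any need for \cref{thm:ltl_next}, exploiting the fact that the output function may be an arbitrary function of the state vector; it costs nothing for this theorem, though the paper's formula-level factoring buys reusable transformations that it interrogates elsewhere (\cref{cor:ltl_tails} is a statement about the prefix transformation itself). Your placement of the nonemptiness hypothesis---needed exactly to make $\initstate$ co-accessible, since no autoregressor can define $\emptyset$---coincides with the paper's use of it.
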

\begin{proof}
See \cref{sec:ltl_equiv_altl_proof} for the full proof; a proof sketch follows.

To prove \labelcref{thm:ltl_to_altl}, 
we need to construct an autoregressor that tests, given any position $i$ and symbol $\sym$, whether $\str_{\le i}\sym$ is a prefix of some string that is accepted by the classifier.
To do this, we introduce two new operators as syntactic sugar that do not increase the expressivity of the logic:
\begin{align*}
\str \models \nextsym{\sym}(\phi) &\iff \str \sym \models \phi \\
\stru \models \pref(\phi) &\iff \text{there exists $\strv \in \kleene{\alphabet}$ such that $\stru\strv \models \phi$}.
\end{align*}
The $\nextsym{\sym}$ operator is what lets us hypothesize $\sym$ as the next symbol, and the $\pref$ operator is what lets us hypothesize the rest of the string.

To prove \labelcref{thm:altl_to_ltl}, we need to construct a classifier that tests whether, for every position $i$, the autoregressor predicts that $\str_{< i}$ can be followed by $\str_{i}$. To test the relationship between each prefix and the next symbol, we use the $\tlY$ operator, and to do so at every position, we use the $\tlH$ operator.
\end{proof}

From \cref{thm:ltl_equiv_altl}, we can conclude that for $\uhats$, which are equivalent to $\LTL$, autoregression does not add any expressivity. On other transformer variants, please see \cref{sec:ltl_fragments}.

The construction that desugars $\pref(\phi)$ into a formula of $\mathsf{TL}[\mathcal{O}]$
yields a formula whose size is exponential in that of $\phi$. To shed light on whether this bound is tight, we show the following.
\begin{restatable}{proposition}{tails}\label{cor:ltl_tails}
\begin{compactenum}[label=(\alph*)]
\item\label{subthm:ltl_tails_HY} There does not exist a transformation $\pref'$ such that $\pref'(\phi)$ is constructible in polynomial time (in $|\phi|$) and satisfies \cref{eq:def_pref} for every formula $\phi$ in $\TL[\tlH,\tlY]$, unless $\PTIME=\PSPACE$.
\item\label{subthm:ltl_tails_H} Similarly for $\TL[\tlH]$, unless $\PTIME=\NPTIME$.
\item\label{subthm:ltl_tails_Y} Similarly for $\TL[\tlY]$, unless $\PTIME=\NPTIME$.
\end{compactenum}
\end{restatable}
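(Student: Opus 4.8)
The plan is to turn a hypothetical fast $\pref'$ into a fast satisfiability test and then invoke the known hardness of satisfiability for each fragment. Whereas \cref{thm:ltl_prefix} produces a formula $\pref(\phi)$ of size exponential in $|\phi|$, the goal here is to show no polynomial-time construction can exist unless standard complexity classes collapse. The starting observation is that instantiating \cref{eq:def_pref} at the empty string $\epsilon$ gives $\epsilon \models \pref'(\phi) \iff \exists\,\strv \in \kleene{\alphabet} : \strv \models \phi$; that is, $\epsilon \models \pref'(\phi)$ holds exactly when $\phi$ is satisfiable. Since any formula of past $\LTL$ can be evaluated at a single position of a fixed finite word (here the length-zero word $\epsilon$) in polynomial time by the standard subformula-labelling procedure, a $\pref'$ constructible in time polynomial in $|\phi|$ (hence of polynomial size) would decide satisfiability of the fragment in polynomial time: build $\pref'(\phi)$, then test $\epsilon \models \pref'(\phi)$.

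It therefore suffices to pin down the complexity of satisfiability for each fragment: I would establish that it is $\PSPACE$-hard for $\TL[\tlH,\tlY]$ (part (a)) and $\NPTIME$-hard for each of $\TL[\tlH]$ and $\TL[\tlY]$ (parts (b) and (c)). A $\PSPACE$-hard problem lying in $\PTIME$ forces $\PTIME=\PSPACE$, and an $\NPTIME$-hard problem lying in $\PTIME$ forces $\PTIME=\NPTIME$, which are exactly the stated conclusions.

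For part (b), I would reduce Boolean satisfiability to $\TL[\tlH]$-satisfiability: using the derived operator $\tlP$, map each propositional variable $x_i$ to a distinct alphabet symbol $\sigma_i$ and to the atom $\tlP\,\sympred{\sigma_i}$ (``$\sigma_i$ occurs at some position''). These truth values can be realised independently by choosing which symbols appear in the word, so a CNF $\psi$ is satisfiable iff its literal-by-literal translation is. For part (c), I would reduce Boolean satisfiability to $\TL[\tlY]$-satisfiability over the alphabet $\{a,b\}$: pin the word length to $n$ with the conjunct $\tlY^{n}\sympred{\bos}$, encode $x_i$ as ``position $i$ carries $a$'' via $\tlY^{\,n-i}\sympred{a}$, and translate $\psi$ literal-by-literal; the nested $\tlY$'s keep the formula of polynomial size.

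For part (a), the key step is $\PSPACE$-hardness of $\TL[\tlH,\tlY]$, which I would obtain by encoding the computation of a polynomial-space Turing machine: lay successive configurations of fixed width $s$ contiguously along the word, use the nested next-operator $\tlY^{\,s+1}$ to address the cell lying one configuration earlier, and use $\tlH$ to impose the local transition rule simultaneously at every position; together with conjuncts fixing the initial configuration and asserting that an accepting configuration occurs, the resulting formula has size polynomial in $s$ and is satisfiable iff the machine accepts. I expect this reduction to be the main obstacle, and its interest is structural: neither $\tlH$ alone (part (b)) nor $\tlY$ alone (part (c)) yields more than $\NPTIME$-hardness, and it is precisely their interaction---a globally-style operator enforcing a local rule everywhere combined with a next-style operator that fixes exact offsets---that produces $\PSPACE$-hardness, just as in the classical proof that full $\LTL$ satisfiability is $\PSPACE$-complete. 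The points requiring care are verifying that the symbol-predicate discipline (exactly one symbol per position, with $\bos$ only at position $0$) faithfully represents configurations, and that all well-formedness constraints on the word are themselves expressible using only $\tlH$ and $\tlY$.
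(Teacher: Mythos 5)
Your core argument is exactly the paper's: instantiating \cref{eq:def_pref} at $\stru=\epsilon$ turns a hypothetical $\pref'$ into a satisfiability test (since $\epsilon \models \pref'(\phi)$ iff $\phi$ is satisfiable), polynomial-time evaluation of a formula on the empty word then places satisfiability of the fragment in $\PTIME$, and hardness of satisfiability yields the collapse. Where you diverge is in how the two ingredients are discharged: the paper simply cites \citet[Thm.~8]{FiondaGreco2016} for polynomial-time evaluation and cites the known completeness results ($\PSPACE$-completeness of satisfiability for $\TL[\tlH,\tlY]$ \citep{DeGiacomoVardi2013,FiondaGreco2016}, $\NPTIME$-completeness for $\TL[\tlH]$ and $\TL[\tlY]$ \citep{FiondaGreco2016}), whereas you re-prove the lower bounds from scratch. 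Your reductions for parts (b) and (c) are correct and clean: occurrence atoms $\tlP\,\sympred{\sigma_i}$ realize independent propositional variables in $\TL[\tlH]$, and pinning the word length with $\tlY^{n}\sympred{\bos}$ while reading position $i$ via $\tlY^{\,n-i}\sympred{a}$ does the same in $\TL[\tlY]$ (note that only hardness, not membership, is needed, which you correctly exploit). For part (a), your plan is the classical Sistla--Clarke-style encoding of a polynomial-space machine, and it does go through using only $\tlH$ and $\tlY$: the guard ``position lies beyond the first configuration'' is $\neg\bigvee_{p\le s}\tlY^{p}\sympred{\bos}$, the initial configuration is fixed by conjuncts of the form $\tlH\bigl(\tlY^{i}\sympred{\bos}\rightarrow\sympred{c_i}\bigr)$, and the word semantics gives you ``exactly one symbol per position'' for free; however, this encoding is the bulk of the work and remains a sketch in your proposal (configuration alignment, off-by-one offsets, and acceptance all need to be pinned down). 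In short, the paper's route is shorter and fully rigorous by citation; yours is self-contained and makes visible \emph{why} the interaction of a globally-style operator with a next-style operator produces $\PSPACE$-hardness, at the cost of re-deriving known theorems whose hardest case you have outlined but not completed.
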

\begin{proof}
See \cref{sec:ltl_tails_proof}. This is a reduction from existing results on the hardness of testing whether a formula defines an empty language \citep{DeGiacomoVardi2013,FiondaGreco2016}.
\end{proof}

Note that we have only shown (conditionally) that constructing $\pref'(\phi)$ requires super-polynomial time; it's possible that $\pref'(\phi)$ is short but difficult to construct.

\subsubsection{Fragments of \LTL}
\label{sec:ltl_fragments}

\Cref{thm:ltl_equiv_altl} shows that \LTL{} classifiers and autoregressors are equivalent, and this remains true for some fragments of \LTL. But the asymmetric conditions of the theorem suggest that when the set of operators $\mathcal{O}$ lacks either $\tlH$ or $\tlY$, Boolean autoregressors are more expressive than classifiers. In this section, we prove that this is indeed the case.

Moreover, such fragments are relevant to the study of transformers.
\Citet{li-cotterell-2025-characterizing} show that fixed-precision future-masked transformers are equivalent to $\TL[\tlP]$, which is in turn equivalent to $\TL[\tlH]$. Similarly, \citet{jerad-etal-2025-unique} show that future-masked leftmost-hard attention transformers are also equivalent to $\TL[\tlP]$.

\begin{proposition} \label{thm:tlp_classifier_autoregressor}
    The language $(ab)^*$ is defined by a Boolean $\TL[\emptyset]$ autoregressor but not defined by any $\TL[\tlH]$ or $\TL[\tlY]$ classifier.
\end{proposition}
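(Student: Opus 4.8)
The plan has one positive part (exhibiting the autoregressor) and two negative parts (for $\TL[\tlH]$ and for $\TL[\tlY]$ classifiers). For the positive part I take the tuple $\Phi = (\sympred{\bos}, \sympred{a}, \sympred{b})$; these are pure symbol predicates, hence lie in $\TL[\emptyset]$, and the induced \strtostate{} distinguishes exactly three situations at each position $i$: we sit at $\bos$ (position $0$), we have just read an $a$, or we have just read a $b$ (these are mutually exclusive and exhaustive, since $\Phi(\str)_i$ depends only on the symbol at position $i$). I then define $\arout$ so that the $\bos$ state permits $\{a,\eos\}$, the $a$ state permits only $\{b\}$, and the $b$ state permits $\{a,\eos\}$, assigning $\zero$ to all other next-symbol weights. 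A routine check shows the strings every one of whose generation steps is permitted and which may legally emit $\eos$ at the end are exactly $\epsilon, ab, abab, \dots$, i.e. $(ab)^*$. To confirm validity I verify \cref{eq:suffix_probability}: from the $\bos$ state the continuation $\epsilon$ (emit $\eos$) is legal; from the $a$ state the continuation $b$ then $\eos$ is legal; and from the $b$ state emitting $\eos$ is legal; since the state after any $\stru$ depends only on its last symbol, this covers all $\stru \in \kleene{\alphabet}$, so the autoregressor is normalized.

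For the negative parts, I first record that in the Boolean semiring a $\TL[\mathcal{O}]$ classifier accepts $\str$ iff $\str \models \psi$ for a single formula $\psi \in \TL[\mathcal{O}]$: the output function $c$ is a Boolean function of $\Phi(\str)_{|\str|}$, and $\TL[\mathcal{O}]$ is closed under the connectives $\lnot,\land$, so $c \circ \Phi$ is itself a $\TL[\mathcal{O}]$ formula evaluated at the last position. Thus it suffices to show $(ab)^*$ is not $\langFun{\psi}$ for any $\psi \in \TL[\tlY]$ and for any $\psi \in \TL[\tlH]$.

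The $\TL[\tlY]$ case is a bounded-window argument. A formula of $\tlY$-nesting-depth $d$ evaluated at the final position $n$ of a string of length $\ge d+1$ depends only on the symbols $w_n, w_{n-1}, \dots, w_{n-d}$ (the window never reaches $\bos$). Taking $N = d+1$, the strings $s = (ab)^N$ and $s' = b(ab)^N$ both have length $\ge d+1$ and identical length-$(d+1)$ suffixes (the window of $s'$ lies entirely inside its trailing $(ab)^N$ block), so $\psi$ returns the same value on both; but $s \in (ab)^*$ while $s' \notin (ab)^*$, a contradiction. I expect the $\TL[\tlH]$ case to be the main obstacle, precisely because naive bounded-memory reasoning fails here: $\TL[\tlH]$ is genuinely more expressive than it first appears (for instance it defines $b^*a^*$ through nested $\tlH$), so a "last symbol plus symbol-set'' invariant is false. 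Instead I would invoke the equivalence $\TL[\tlH] \equiv \TL[\tlP] \equiv$ partially ordered DFAs \citep{jerad-etal-2025-unique} and show directly that no partially ordered DFA recognizes $(ab)^*$. Reading $(ab)^N$ for growing $N$, the states reached after whole $ab$-blocks form a chain in the partial order, so by finiteness they stabilize at some state $q^\ast$ after finitely many blocks; since $(ab)^K$ is accepted, $q^\ast$ is accepting. But then $\trans(q^\ast, a) = r$ must be rejecting (as $(ab)^K a \notin (ab)^*$) while $\trans(r,b) = q^\ast$, producing a cycle $q^\ast \to r \to q^\ast$ with $q^\ast \neq r$, which is impossible in a partial order. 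Hence $(ab)^*$ is not $\TL[\tlH]$-definable, completing the proof.
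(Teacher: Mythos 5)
Your proof is correct, and it splits into a part that matches the paper and a part that takes a genuinely different route. The positive half is the paper's own construction: the same state encoder $(\sympred{\bos}, \sympred{a}, \sympred{b})$ and the same output function (your explicit verification of \cref{eq:suffix_probability}, including for non-generable prefixes like those ending in a ``stray'' symbol, is in fact more careful than the paper, which states the construction without checking normalization). The $\TL[\tlY]$ half is also essentially the paper's argument: both are bounded-suffix-window arguments, the paper contrasting $ab(ab)^{\lceil k/2\rceil}$ with $ba(ab)^{\lceil k/2\rceil}$ where you contrast $(ab)^N$ with $b(ab)^N$. The divergence is the $\TL[\tlH]$ half. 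The paper invokes the fact that $\TL[\tlP]$ (hence $\TL[\tlH]$) can define only stutter-invariant languages \citep{peled1997stutter} and notes $ab \in (ab)^*$ but $aabb \notin (ab)^*$; you instead route through the equivalence of $\TL[\tlP]$ with partially ordered DFAs \citep{jerad-etal-2025-unique} and argue structurally: the states reached after successive $ab$-blocks form a weakly increasing chain, hence stabilize at an accepting $q^\ast$ with $\trans^*(q^\ast,ab)=q^\ast$, and then $q^\ast \preceq \trans(q^\ast,a) \preceq q^\ast$ forces $\trans(q^\ast,a)=q^\ast$ by antisymmetry, contradicting that $(ab)^K a$ must be rejected. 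Both halves rest on an external characterization from the literature, so neither is more self-contained in an absolute sense; what the paper's stutter-invariance route buys is brevity and reusability (the same invariance argument is the engine of the $(aab)^*$ inexpressibility proof in \cref{sec:aab_inexpressivity}), while your po-DFA route buys a concrete structural picture of \emph{why} the language fails, and would port to other fragments characterized by partially ordered automata even where stutter-invariance is not the natural invariant.
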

\begin{proof} 
    Consider the \strtostate{} induced by the triple of formulas $\Phi=(\sympred{\bos},\sympred{a},\sympred{b})$ as in \cref{eq:ltl_state_encoder},
    \begin{align*}
    \Phi(\str)_i &= (\ind{\str, i \models \sympred{\bos}}, \ind{\str, i \models \sympred{a}}, \ind{\str, i \models \sympred{b}})
    \end{align*}
    \begin{align*}
    \arout \colon \B^3 &\to \B^{\{a,b,\eos\}} \\
        \arout((q_\bos,q_a,q_b))(a)&=\top\iff \text{$q_\bos=\top$ or $q_b=\top$}\\
        \arout((q_\bos,q_a,q_b))(b)&=\top\iff \text{$q_a=\top$}\\
        \arout((q_\bos,q_a,q_b))(\eos)&=\top\iff \text{$q_\bos=\top$ or $q_b=\top$}.
    \end{align*}
    This defines $(ab)^*$.
    
    But a formula in $\TL[\tlY]$ cannot distinguish between strings that differ beyond their last $k$ symbols (for some constant $k$ depending on the formula), and for any $k$, we have $ab(ab)^{\lceil k/2\rceil} \in (ab)^*$ but $ba(ab)^{\lceil k/2\rceil} \not\in (ab)^*$. A formula in $\TL[\tlH]$ is equivalent to one in $\TL[\tlP]$, which can only define a stutter-invariant language, that is, a language $L$ such that for all $\stru, \sym, \strv$, we have $\stru \sym \strv \in L \iff \stru \sym\sym \strv \in L$ \citep{peled1997stutter}. And $(ab)^*$ is not stutter-invariant, because $ab \in (ab)^*$ but $aab \not\in (ab)^*$.\looseness=-1
\end{proof}

Consequently, $(ab)^*$ is definable by leftmost-hard $\uhat$s and fixed-precision $\smat$s as autoregressors, but not as classifiers.
However, the expressiveness added by autoregression remains limited, as $(aab)^*$ is not definable.

\begin{restatable}{proposition}{aab}\label{prop:aab_not_tl}
    The language $(aab)^*$ is not definable by any $\TL[\tlH]$ classifier or autoregressor.
\end{restatable}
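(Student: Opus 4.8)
The plan is to treat the classifier and autoregressor cases separately, in both cases leaning on the fact already used in \cref{thm:tlp_classifier_autoregressor}: since $\TL[\tlH]$ is equivalent to $\TL[\tlP]$, every language definable by a $\TL[\tlH]$ formula is \emph{stutter-invariant} \citep{peled1997stutter}, i.e.\ for any formula $\phi$ and any $\stru,\strv \in \kleene{\alphabet}$ and $\sym \in \alphabet$, $\stru\sym\strv \models \phi \iff \stru\sym\sym\strv \models \phi$. For the classifier case I would first note that a Boolean $\TL[\tlH]$ classifier $(\Phi,c)$ with $\Phi=(\phi_1,\dots,\phi_m)$ accepts exactly the strings satisfying the single $\TL[\tlH]$ formula $\bigvee_{c(\statevec)=\true}\bigwedge_k(\phi_k \leftrightarrow \statescl_k)$, so its language is stutter-invariant. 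But $(aab)^*$ is not: with $\stru=\epsilon$, $\sym=a$, $\strv=ab$ we get $aab \in (aab)^*$ while $aaab \notin (aab)^*$, since $|aaab|=4$ is not a multiple of $3$. Hence no $\TL[\tlH]$ classifier defines $(aab)^*$.

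For the autoregressor case I would assume toward a contradiction that $\autoregressor=(\Phi,\arout)$ with $\Phi=(\phi_1,\dots,\phi_m)$ a tuple of $\TL[\tlH]$ formulas defines $(aab)^*$. The key observation is that the state reached after reading a prefix $p$, namely $(\ind{p \models \phi_1},\dots,\ind{p \models \phi_m})$, depends only on the stutter-reduction of $p$ (the string obtained by collapsing maximal runs of equal symbols). This holds because past formulas are determined by the prefix and each $\phi_k$ is stutter-invariant. Consequently the prefixes $a$ and $aa$ both reduce to $a$ and reach a common state $s$, and the prefixes $ab$ and $aab$ both reduce to $ab$ and reach a common state $s'$; I also write $s_0$ for the state after $\epsilon$.

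The contradiction is then extracted from the acceptance of $aab$. Writing $\strprob{\autoregressor}{aab}=\true$ as the conjunction of its factors forces each factor to be $\true$: the second symbol gives $\arout(s)(a)=\true$, the third symbol gives $\arout(s)(b)=\true$ (using that $aa$ also reaches $s$), and the final factor gives $\arout(s')(\eos)=\true$; likewise $\arout(s_0)(a)=\true$. Now I trace the generation of $ab$: its first symbol $a$ is allowed at $\epsilon$ exactly as before, its second symbol $b$ is allowed because $a$ reaches $s$ and $\arout(s)(b)=\true$, and $\eos$ is allowed because $ab$ reaches $s'$ and $\arout(s')(\eos)=\true$. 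Therefore $\strprob{\autoregressor}{ab}=\true$, so $ab$ is accepted, contradicting $ab \notin (aab)^*$.

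The step I expect to be the main obstacle is the state-collapse observation, specifically making precise that evaluating a past-only, yesterday-free formula at the \emph{last} position of a prefix is invariant under duplicating an \emph{interior} symbol (needed to identify the states of $ab$ and $aab$), and not merely the final symbol. This is exactly stutter-invariance applied with a nonempty $\strv$, but the write-up must also confirm that the boundary atom $\sympred{\bos}$ behaves correctly and that the invariance is genuinely pointwise at the evaluation position rather than only at the level of full-string membership; both are routine given the $\TL[\tlH]$ semantics.
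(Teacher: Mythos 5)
Your proof is correct, but for the autoregressor half it takes a genuinely different route from the paper's. The paper never reasons about the autoregressor directly: it invokes \cref{thm:ltl_equiv_altl}\labelcref{thm:altl_to_ltl} to convert the autoregressor into a $\TL[\tlH,\tlY]$ classifier whose formula has $\tlY$-depth $1$, and then proves the stronger statement that no $\tlY$-depth-$1$ formula defines $(aab)^*$, via a bigram re-encoding (\cref{thm:noy}) that pushes the single $\tlY$ into the atoms so that stutter invariance of $\TL[\tlH]$ can be applied over the bigram alphabet. You instead stay with the autoregressor itself: since each $\phi_k$ defines a stutter-invariant language, the state $(\ind{p \models \phi_k})_k$ reached after a prefix $p$ depends only on the stutter reduction of $p$; hence $a$ and $aa$ share a state $s$, $ab$ and $aab$ share a state $s'$, and the factors certifying $\strprob{\autoregressor}{aab}=\true$ already certify $\strprob{\autoregressor}{ab}=\true$, contradicting $ab \notin (aab)^*$. (Your classifier case is the same stutter-invariance argument the paper relies on, so no difference there.) The step you flagged as the main obstacle is not actually one: evaluation of a past formula at the last position is exactly language membership, so language-level stutter invariance --- the very fact the paper already uses in \cref{thm:tlp_classifier_autoregressor}, with $\bos$ atoms allowed --- applied with nonempty $\strv$ handles interior duplication, and iterating it collapses whole runs. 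Comparing the two approaches: yours is shorter and self-contained, needing neither the autoregressor-to-classifier conversion nor the bigram lemma, and amounts to a Myhill--Nerode-style state collapse plus a trace of the generation process; the paper's is heavier but yields reusable byproducts --- inexpressibility for \emph{every} $\tlY$-depth-$1$ formula of $\TL[\tlH,\tlY]$, and \cref{thm:noy} as a general tool for eliminating one level of $\tlY$ --- which is what lets it treat classifiers and autoregressors uniformly in a single statement.
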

\begin{proof}
    See \cref{sec:aab_inexpressivity}. We show that to distinguish $aab$ from $aaab$, we need at least two nested $\tlY$ operators. But the conversion from an autoregressor to a classifier (\cref{thm:ltl_equiv_altl}\labelcref{thm:altl_to_ltl}) adds only one $\tlY$, so $(aab)^*$ is not definable by any $\TL[\tlH]$ autoregressor.
\end{proof}
Consequently, $(aab)^*$ is not definable by any leftmost-hard $\uhat$ or fixed-precision $\smat$, either as autoregressors or classifiers.

\subsubsection{Temporal Logic with Counting}
\label{sec:crasp}

Other formalisms besides the ones discussed above have been proposed for comparison with transformers. 
\Citet{yang-etal-2025-knee} prove that \smats,  with fixed precision outside attention and arbitrary precision inside attention, are equivalent to a temporal logic with counting operators, $\TL[\countl]$. 
They considered the family of languages

{
\setlength{\abovedisplayskip}{0pt}
\setlength{\belowdisplayskip}{0pt}
\setlength{\abovedisplayshortskip}{0pt}
\setlength{\belowdisplayshortskip}{0pt}
\begin{minipage}{0.45\textwidth}
    \begin{align}
        L_1 &= a^*
    \end{align}
\end{minipage}
\hfill
\begin{minipage}{0.45\textwidth}
    \begin{align}
        L_{k+1} &= \begin{cases}
            L_k b^* & \text{$k$ even} \\
            L_k a^* & \text{$k$ odd}
        \end{cases}
    \end{align}
\end{minipage}
}

and showed that, as Boolean classifiers, transformers with depth $k$ can recognize $L_k$ (and not $L_{k+1}$). But their experiments were on the symbol-prediction task (\cref{sec:related}), closely related to Boolean autoregression. They showed both theoretically and experimentally that \smats with depth $k$ can solve the symbol-prediction task for not only $L_k$, but $L_{k+2}$ (and not $L_{k+3}$).
In the present framework, this discrepancy can be readily explained. Like $\TL[\tlH]$, the logic $\TL[\countl]$ lacks a $\tlY$ operator or an equivalent. So it is more expressive as an autoregressor than as a classifier.

\section{Related Work}
\label{sec:related}

Theoretical study of transformers as language models has not gone totally neglected.
\Citet{hahn-2020-theoretical} compared a \smat{} language model with a probabilistic finite automaton for parity (strings that have an odd number of $1$'s).
\Citet{yao-etal-2021-self}, following previous work on RNNs, considered a transformer language model to $\epsilon$-generate a language if it assigns probability at least $\epsilon$ to each symbol in every string in the language (and no strings not in the language).
They also discussed how to convert a construction for a bounded Dyck language (strings of matching parentheses up to a certain depth) from an $\epsilon$-generator to a language recognizer. \Citet{svete-cotterell-2024-transformers} showed that average-hard attention transformer language models can exactly express all $n$-gram language models. 
These studies were specialized to particular languages, or used specialized ways of comparing distributions that do not generalize in an obvious way.

Experimentally, \citet{bhattamishra-etal-2020-ability} proved theoretical results on transformers as language recognizers but 
carried out experiments on transformer language models for the character prediction task: predict, at each position, the set of next possible symbols, that is, Boolean autoregression.
This experimental setup was previously used in studies of RNNs, and has been adopted in other studies of transformers \citep{huang2025a,yang-etal-2025-knee}, which we discussed in \cref{sec:crasp}.
The idea that the sequence of output vectors of a transformer and the states of a finite automaton can be connected via the notion of a \strtostate{} is not new; previous results on using transformers to simulate (weighted) finite automata made a similar connection \citep{Liu-2022-shortcuts,pmlr-v238-rizvi-martel24a}.\looseness=-1

\section{Discussion} We have observed settings where classifiers coincide with autoregressors, and settings where they do not. Where the two do coincide (e.g.,~Boolean-weighted \LTL{} and \uhats), we can now transfer results on definability from the more well-understood world of classifiers to autoregressors. For example,~PARITY is not expressible by \uhat{} classifiers \citep{hahn-2020-theoretical}, and therefore not by \uhat{} autoregressors. Where classifiers and autoregressors do not coincide (e.g.,~with real weights), we do not yet have good techniques for showing inexpressibility by autoregressors, which is left for future work.\looseness=-1

One direction for future work is to extend our results to more realistic classes of transformers by considering log-precision softmax attention or positional encodings. 
The former would require stronger characterizations of log-precision softmax attention transformers. 
The latter could utilize connections between positional encodings and numerical predicates, as discussed by \citet{yang-etal-2024-masked,barcelo-etal-24}.
Another future avenue is studying autoregressors enriched with chain of thought, that is, allowing the autoregressor to run for a number of intermediate steps before producing an answer. 
There are numerous results relating transformers with chain of thought and different classes of computational problems \citep{attention-turing,MS24,BPG20,enable-cot,nowak-etal-2024-representational,constant-cot,hou2025universal}, but only with Boolean weights. 
What can be said about real-weighted autoregressors with chain of thought, and the probability distributions they compute? For example, can they compute those of probabilistic Turing machines solving problems in the BPP or even PP complexity classes?

Lastly, by clarifying the relationship between classifiers and autoregressors, our results provide a principled way to interpet and build upon the various experiments that test the expressive capabilities of transformers \citep[][\textit{inter alia}]{weiss-etal-2018-practical,bhattamishra-etal-2020-ability,van-der-poel-2024-mlregtest,deletang2023neural,someya-etal-2024-targeted,borenstein-etal-2024-languages,ICLR2025_7256b2c0}.
For example, we are able to explain in \cref{sec:crasp} why transformers as classifiers and autoregressors exhibit different expressive capabilities in \citet{yang-etal-2025-knee}'s experiments.
More generally, we have identified several theoretical separations between the expressive capabilities of classifiers and autoregressors in both the Boolean and real-weighted cases, which future work could probe experimentally.

\ificlrfinal

\section*{Acknowledgements}

This paper developed from the findings of the working group on probability at Dagstuhl Seminar 25282, ``Theory of Neural Language Models.'' We are grateful to the Leibniz Center for Informatics for their support.
We also thank Gavin Dooley for his feedback and a correction.
This material is based in part upon work supported by the US National Science Foundation under Grant No.~2502292 and the European Research Council under Grant No.~101089343.
Andy Yang is supported by the US National Science Foundation Graduate Research Fellowship Program under Grant No.~2236418, and
Anej Svete is supported
by the ETH AI Center Doctoral Fellowship.
\fi

\bibliography{references}

\begin{thebibliography}{45}
\providecommand{\natexlab}[1]{#1}
\providecommand{\url}[1]{\texttt{#1}}
\expandafter\ifx\csname urlstyle\endcsname\relax
  \providecommand{\doi}[1]{doi: #1}\else
  \providecommand{\doi}{doi: \begingroup \urlstyle{rm}\Url}\fi

\bibitem[Barcelo et~al.(2024)Barcelo, Kozachinskiy, Lin, and Podolskii]{barcelo-etal-24}
Pablo Barcelo, Alexander Kozachinskiy, Anthony~W. Lin, and Vladimir Podolskii.
\newblock Logical languages accepted by transformer encoders with hard attention.
\newblock In \emph{Proceedings of the 12th International Conference on Representation Learning (ICLR)}, volume 2024, pages 22077--22087, 2024.
\newblock URL \url{https://proceedings.iclr.cc/paper_files/paper/2024/file/5f0fdc1acd47431f7f3bb8ee85598cef-Paper-Conference.pdf}.

\bibitem[Bhattamishra et~al.(2020{\natexlab{a}})Bhattamishra, Ahuja, and Goyal]{bhattamishra-etal-2020-ability}
Satwik Bhattamishra, Kabir Ahuja, and Navin Goyal.
\newblock On the ability and limitations of {T}ransformers to recognize formal languages.
\newblock In \emph{Proceedings of the 2020 Conference on Empirical Methods in Natural Language Processing (EMNLP)}, pages 7096--7116, 2020{\natexlab{a}}.
\newblock \doi{10.18653/v1/2020.emnlp-main.576}.

\bibitem[Bhattamishra et~al.(2020{\natexlab{b}})Bhattamishra, Patel, and Goyal]{BPG20}
Satwik Bhattamishra, Arkil Patel, and Navin Goyal.
\newblock On the computational power of transformers and its implications in sequence modeling.
\newblock In \emph{Proceedings of the 24th Conference on Computational Natural Language Learning}, pages 455--475, 2020{\natexlab{b}}.
\newblock \doi{10.18653/v1/2020.conll-1.37}.
\newblock URL \url{https://aclanthology.org/2020.conll-1.37/}.

\bibitem[Borenstein et~al.(2024)Borenstein, Svete, Chan, Valvoda, Nowak, Augenstein, Chodroff, and Cotterell]{borenstein-etal-2024-languages}
Nadav Borenstein, Anej Svete, Robin Chan, Josef Valvoda, Franz Nowak, Isabelle Augenstein, Eleanor Chodroff, and Ryan Cotterell.
\newblock What languages are easy to language-model? {A} perspective from learning probabilistic regular languages.
\newblock In \emph{Proceedings of the 62nd Annual Meeting of the Association for Computational Linguistics}, pages 15115--15134, 2024.
\newblock \doi{10.18653/v1/2024.acl-long.807}.

\bibitem[Brzozowski(1964)]{brzozowski1964derivatives}
Janusz~A. Brzozowski.
\newblock Derivatives of regular expressions.
\newblock \emph{Journal of the Association for Computing Machinery}, 11\penalty0 (4):\penalty0 481–494, October 1964.
\newblock \doi{10.1145/321239.321249}.

\bibitem[Butoi et~al.(2025)Butoi, Khalighinejad, Svete, Valvoda, Cotterell, and DuSell]{ICLR2025_7256b2c0}
Alexandra Butoi, Ghazal Khalighinejad, Anej Svete, Josef Valvoda, Ryan Cotterell, and Brian DuSell.
\newblock Training neural networks as recognizers of formal languages.
\newblock In \emph{Proceedings of the 13th International Conference on Learning Representations (ICLR)}, volume 2025, pages 46273--46316, 2025.
\newblock URL \url{https://openreview.net/forum?id=aWLQTbfFgV}.

\bibitem[Del{\'e}tang et~al.(2023)Del{\'e}tang, Ruoss, Grau-Moya, Genewein, Wenliang, Catt, Cundy, Hutter, Legg, Veness, and Ortega]{deletang2023neural}
Gr{\'e}goire Del{\'e}tang, Anian Ruoss, Jordi Grau-Moya, Tim Genewein, Li~Kevin Wenliang, Elliot Catt, Chris Cundy, Marcus Hutter, Shane Legg, Joel Veness, and Pedro~A. Ortega.
\newblock Neural networks and the {C}homsky hierarchy.
\newblock In \emph{Proceedings of the 11th International Conference on Learning Representations (ICLR)}, 2023.
\newblock URL \url{https://openreview.net/forum?id=WbxHAzkeQcn}.

\bibitem[Devlin et~al.(2019)Devlin, Chang, Lee, and Toutanova]{devlin2019bert}
Jacob Devlin, Ming-Wei Chang, Kenton Lee, and Kristina Toutanova.
\newblock {BERT}: Pre-training of deep bidirectional {T}ransformers for language understanding.
\newblock In \emph{Proceedings of the 2019 Conference of the North American Chapter of the Association for Computational Linguistics: Human Language Technologies (NAACL HLT)}, pages 4171--4186, 2019.
\newblock \doi{10.18653/v1/N19-1423}.

\bibitem[Droste and Gastin(2008)]{droste-gastin-2008}
Manfred Droste and Paul Gastin.
\newblock On aperiodic and star-free formal power series in partially commuting variables.
\newblock \emph{Theory of Computing Systems}, 42\penalty0 (4):\penalty0 608--631, 2008.
\newblock \doi{10.1007/s00224-007-9064-z}.

\bibitem[Droste and Gastin(2019)]{droste-gastin-2019-aperiodic}
Manfred Droste and Paul Gastin.
\newblock Aperiodic weighted automata and weighted first-order logic.
\newblock In \emph{Proceedings of the 44th International Symposium on Mathematical Foundations of Computer Science}, 2019.
\newblock \doi{10.4230/LIPICS.MFCS.2019.76}.

\bibitem[Droste and Kuich(2009)]{droste-kuich-2009}
Manfred Droste and Werner Kuich.
\newblock Semirings and formal power series.
\newblock In Manfred Droste, Werner Kuich, and Heiko Vogler, editors, \emph{Handbook of Weighted Automata}, pages 3--28. Springer Berlin Heidelberg, Berlin, Heidelberg, 2009.
\newblock ISBN 978-3-642-01492-5.
\newblock \doi{10.1007/978-3-642-01492-5_1}.

\bibitem[Droste and Kuske(2021)]{DK21}
Manfred Droste and Dietrich Kuske.
\newblock Weighted automata.
\newblock In Jean{-}{\'{E}}ric Pin, editor, \emph{Handbook of Automata Theory}, pages 113--150. European Mathematical Society Publishing House, Z{\"{u}}rich, Switzerland, 2021.
\newblock \doi{10.4171/AUTOMATA-1/4}.

\bibitem[Fionda and Greco(2016)]{FiondaGreco2016}
Valeria Fionda and Gianluigi Greco.
\newblock The complexity of {LTL} on finite traces: Hard and easy fragments.
\newblock In \emph{Proceedings of the AAAI Conference on Artificial Intelligence}, volume~30, pages 971--977, 2016.
\newblock \doi{10.1609/aaai.v30i1.10104}.

\bibitem[Giacomo and Vardi(2013)]{DeGiacomoVardi2013}
Giuseppe~De Giacomo and Moshe~Y. Vardi.
\newblock Linear temporal logic and linear dynamic logic on finite traces.
\newblock In \emph{Proceedings of the 23rd International Joint Conference on Artificial Intelligence (IJCAI)}, pages 854--860, 2013.
\newblock URL \url{https://www.ijcai.org/Proceedings/13/Papers/132.pdf}.

\bibitem[Goranko and Rumberg(2025)]{sep-temporal-logic}
Valentin Goranko and Antje Rumberg.
\newblock {Temporal Logic}.
\newblock In Edward~N. Zalta and Uri Nodelman, editors, \emph{The {S}tanford Encyclopedia of Philosophy}. Metaphysics Research Lab, Stanford University, {S}ummer 2025 edition, 2025.
\newblock URL \url{https://plato.stanford.edu/archives/sum2025/entries/logic-temporal/}.

\bibitem[Hahn(2020)]{hahn-2020-theoretical}
Michael Hahn.
\newblock Theoretical limitations of self-attention in neural sequence models.
\newblock \emph{Transactions of the Association for Computational Linguistics}, 8:\penalty0 156--171, 2020.
\newblock \doi{10.1162/tacl_a_00306}.

\bibitem[Hou et~al.(2025)Hou, Malach, Jelassi, Brandfonbrener, and Kakade]{hou2025universal}
Kaiying Hou, Eran Malach, Samy Jelassi, David Brandfonbrener, and Sham~M. Kakade.
\newblock Universal length generalization with turing programs.
\newblock In \emph{Proceedings of the 42nd International Conference on Machine Learning (ICML)}, 2025.
\newblock URL \url{https://openreview.net/forum?id=RNSd6G3lcD}.

\bibitem[Huang et~al.(2025)Huang, Yang, Bhattamishra, Sarrof, Krebs, Zhou, Nakkiran, and Hahn]{huang2025a}
Xinting Huang, Andy Yang, Satwik Bhattamishra, Yash Sarrof, Andreas Krebs, Hattie Zhou, Preetum Nakkiran, and Michael Hahn.
\newblock A formal framework for understanding length generalization in transformers.
\newblock In \emph{Proceedings of the 13th International Conference on Learning Representations (ICLR)}, 2025.
\newblock URL \url{https://openreview.net/forum?id=U49N5V51rU}.

\bibitem[Jerad et~al.(2025)Jerad, Svete, Li, and Cotterell]{jerad-etal-2025-unique}
Selim Jerad, Anej Svete, Jiaoda Li, and Ryan Cotterell.
\newblock Unique hard attention: A tale of two sides.
\newblock In \emph{Proceedings of the 63rd Annual Meeting of the Association for Computational Linguistics (ACL)}, pages 977--996, 2025.
\newblock \doi{10.18653/v1/2025.acl-short.76}.

\bibitem[Kamp(1968)]{kamp:1968}
Johan Anthony~Willem Kamp.
\newblock \emph{Tense Logic and the Theory of Linear Order}.
\newblock PhD thesis, University of California, Los Angeles, 1968.
\newblock URL \url{https://www.proquest.com/docview/302320357}.

\bibitem[Li and Cotterell(2025)]{li-cotterell-2025-characterizing}
Jiaoda Li and Ryan Cotterell.
\newblock Characterizing the expressivity of transformer language models.
\newblock In \emph{Advances in Neural Information Processing Systems (NeurIPS) 38}, 2025.
\newblock URL \url{https://arxiv.org/abs/2505.23623}.
\newblock To appear.

\bibitem[Li and Wang(2025)]{constant-cot}
Qian Li and Yuyi Wang.
\newblock Constant bit-size transformers are {T}uring complete.
\newblock In \emph{Advances in Neural Information Processing Systems (NeurIPS) 38}, 2025.
\newblock URL \url{https://arxiv.org/abs/2506.12027}.
\newblock To appear.

\bibitem[Li et~al.(2024)Li, Liu, Zhou, and Ma]{enable-cot}
Zhiyuan Li, Hong Liu, Denny Zhou, and Tengyu Ma.
\newblock Chain of thought empowers transformers to solve inherently serial problems.
\newblock In \emph{Proceedings of the 12th International Conference on Learning Representations (ICLR)}, 2024.
\newblock URL \url{https://openreview.net/forum?id=3EWTEy9MTM}.

\bibitem[Liu et~al.(2023)Liu, Ash, Goel, Krishnamurthy, and Zhang]{Liu-2022-shortcuts}
Bingbin Liu, Jordan~T. Ash, Surbhi Goel, Akshay Krishnamurthy, and Cyril Zhang.
\newblock Transformers learn shortcuts to automata.
\newblock In \emph{Proceedings of the 11th International Conference on Learning Representations (ICLR)}, 2023.
\newblock URL \url{https://openreview.net/forum?id=De4FYqjFueZ}.

\bibitem[Mandrali and Rahonis(2013)]{mandrali2013characterizations}
Eleni Mandrali and George Rahonis.
\newblock Characterizations of weighted first-order logics over semirings.
\newblock In \emph{Algebraic Informatics: 5th International Conference (CAI)}, pages 247--259, 2013.
\newblock \doi{10.1007/978-3-642-40663-8_23}.

\bibitem[Mandrali and Rahonis(2015)]{mandrali2015weighted}
Eleni Mandrali and George Rahonis.
\newblock Weighted first-order logics over semirings.
\newblock \emph{Acta Cybernetica}, 22\penalty0 (2):\penalty0 435--483, 2015.
\newblock \doi{10.14232/actacyb.22.2.2015.13}.

\bibitem[McNaughton and Papert(1971)]{mcnaughtonpapert1971}
Robert McNaughton and Seymour Papert.
\newblock \emph{Counter-Free Automata}.
\newblock Number~65 in M.I.T. Press Research Monographs. M.I.T. Press, 1971.
\newblock URL \url{https://archive.org/details/CounterFre_00_McNa}.

\bibitem[Merrill and Sabharwal(2024)]{MS24}
William Merrill and Ashish Sabharwal.
\newblock The expressive power of transformers with chain of thought.
\newblock In \emph{Proceedings of the 12th International Conference on Learning Representations (ICLR)}, 2024.
\newblock URL \url{https://openreview.net/forum?id=NjNGlPh8Wh}.

\bibitem[Mohri(1997)]{mohri-1997-finite}
Mehryar Mohri.
\newblock Finite-state transducers in language and speech processing.
\newblock \emph{Computational Linguistics}, 23\penalty0 (2):\penalty0 269--311, 1997.
\newblock URL \url{https://aclanthology.org/J97-2003/}.

\bibitem[Nowak et~al.(2024)Nowak, Svete, Butoi, and Cotterell]{nowak-etal-2024-representational}
Franz Nowak, Anej Svete, Alexandra Butoi, and Ryan Cotterell.
\newblock On the representational capacity of neural language models with chain-of-thought reasoning.
\newblock In \emph{Proceedings of the 62nd Annual Meeting of the Association for Computational Linguistics}, pages 12510--12548, 2024.
\newblock \doi{10.18653/v1/2024.acl-long.676}.

\bibitem[Peled and Wilke(1997)]{peled1997stutter}
Doron Peled and Thomas Wilke.
\newblock Stutter-invariant temporal properties are expressible without the next-time operator.
\newblock \emph{Information Processing Letters}, 63\penalty0 (5):\penalty0 243--246, 1997.
\newblock \doi{10.1016/S0020-0190(97)00133-6}.

\bibitem[P{\'{e}}rez et~al.(2021)P{\'{e}}rez, Barcel{\'{o}}, and Marinkovic]{attention-turing}
Jorge P{\'{e}}rez, Pablo Barcel{\'{o}}, and Javier Marinkovic.
\newblock Attention is {T}uring-complete.
\newblock \emph{Journal of Machine Learning Research}, 22:\penalty0 75:1--75:35, 2021.
\newblock URL \url{http://jmlr.org/papers/v22/20-302.html}.

\bibitem[Rizvi-Martel et~al.(2024)Rizvi-Martel, Lizaire, Lacroce, and Rabusseau]{pmlr-v238-rizvi-martel24a}
Michael Rizvi-Martel, Maude Lizaire, Clara Lacroce, and Guillaume Rabusseau.
\newblock Simulating weighted automata over sequences and trees with transformers.
\newblock In \emph{Proceedings of The 27th International Conference on Artificial Intelligence and Statistics (AISTATS)}, pages 2368--2376, 2024.
\newblock URL \url{https://proceedings.mlr.press/v238/rizvi-martel24a.html}.

\bibitem[Sakarovitch(2009)]{Sakarovitch2009}
Jacques Sakarovitch.
\newblock Rational and recognisable power series.
\newblock In Manfred Droste, Werner Kuich, and Heiko Vogler, editors, \emph{Handbook of Weighted Automata}, pages 105--174. Springer, 2009.
\newblock \doi{10.1007/978-3-642-01492-5_4}.

\bibitem[Schützenberger(1965)]{schutzenberger:1965}
M.~P. Schützenberger.
\newblock On finite monoids having only trivial subgroups.
\newblock \emph{Information and Control}, 8\penalty0 (2):\penalty0 190--194, 1965.
\newblock \doi{10.1016/S0019-9958(65)90108-7}.

\bibitem[Sipser(2013)]{sipser-2013}
Michael Sipser.
\newblock \emph{Introduction to the Theory of Computation}.
\newblock Cengage, Boston, MA, third edition, 2013.
\newblock ISBN 113318779X.

\bibitem[Someya et~al.(2024)Someya, Yoshida, and Oseki]{someya-etal-2024-targeted}
Taiga Someya, Ryo Yoshida, and Yohei Oseki.
\newblock Targeted syntactic evaluation on the {C}homsky hierarchy.
\newblock In \emph{Proceedings of the 2024 Joint International Conference on Computational Linguistics, Language Resources and Evaluation (LREC-COLING 2024)}, pages 15595--15605, 2024.
\newblock URL \url{https://aclanthology.org/2024.lrec-main.1356/}.

\bibitem[Strobl et~al.(2024)Strobl, Merrill, Weiss, Chiang, and Angluin]{strobl-etal-2024-survey}
Lena Strobl, William Merrill, Gail Weiss, David Chiang, and Dana Angluin.
\newblock What formal languages can transformers express? {A} survey.
\newblock \emph{Transactions of the Association for Computational Linguistics}, 12:\penalty0 543--561, 2024.
\newblock \doi{10.1162/tacl_a_00663}.

\bibitem[Svete and Cotterell(2024)]{svete-cotterell-2024-transformers}
Anej Svete and Ryan Cotterell.
\newblock Transformers can represent $n$-gram language models.
\newblock In \emph{Proceedings of the 2024 Conference of the North American Chapter of the Association for Computational Linguistics: Human Language Technologies}, pages 6845--6881, 2024.
\newblock \doi{10.18653/v1/2024.naacl-long.381}.

\bibitem[van~der Poel et~al.(2024)van~der Poel, Lambert, Kostyszyn, Gao, Verma, Andersen, Chau, Peterson, Clair, Fodor, Shibata, and Heinz]{van-der-poel-2024-mlregtest}
Sam van~der Poel, Dakotah Lambert, Kalina Kostyszyn, Tiantian Gao, Rahul Verma, Derek Andersen, Joanne Chau, Emily Peterson, Cody~St. Clair, Paul Fodor, Chihiro Shibata, and Jeffrey Heinz.
\newblock {MLR}eg{T}est: A benchmark for the machine learning of regular languages.
\newblock \emph{Journal of Machine Learning Research}, 25\penalty0 (283):\penalty0 1--45, 2024.
\newblock URL \url{https://jmlr.org/papers/v25/23-0518.html}.

\bibitem[Weiss et~al.(2018)Weiss, Goldberg, and Yahav]{weiss-etal-2018-practical}
Gail Weiss, Yoav Goldberg, and Eran Yahav.
\newblock On the practical computational power of finite precision {RNN}s for language recognition.
\newblock In \emph{Proceedings of the 56th Annual Meeting of the Association for Computational Linguistics}, pages 740--745, 2018.
\newblock \doi{10.18653/v1/P18-2117}.

\bibitem[Yang and Chiang(2024)]{yang2024counting}
Andy Yang and David Chiang.
\newblock Counting like transformers: Compiling temporal counting logic into softmax transformers.
\newblock In \emph{Proceedings of the First Conference on Language Modeling (CoLM)}, 2024.
\newblock URL \url{https://openreview.net/forum?id=FmhPg4UJ9K}.

\bibitem[Yang et~al.(2024)Yang, Chiang, and Angluin]{yang-etal-2024-masked}
Andy Yang, David Chiang, and Dana Angluin.
\newblock Masked hard-attention transformers recognize exactly the star-free languages.
\newblock In \emph{Advances in Neural Information Processing Systems (NeurIPS) 37}, pages 10202--10235, 2024.
\newblock URL \url{https://proceedings.neurips.cc/paper_files/paper/2024/hash/13d7f172259b11b230cc5da8768abc5f-Abstract-Conference.html}.

\bibitem[Yang et~al.(2025)Yang, Cadilhac, and Chiang]{yang-etal-2025-knee}
Andy Yang, Micha{\"e}l Cadilhac, and David Chiang.
\newblock Knee-deep in {C-RASP}: A transformer depth hierarchy.
\newblock In \emph{Advances in Neural Information Processing Systems (NeurIPS) 38}, 2025.
\newblock URL \url{https://arxiv.org/abs/2506.16055}.
\newblock To appear.

\bibitem[Yao et~al.(2021)Yao, Peng, Papadimitriou, and Narasimhan]{yao-etal-2021-self}
Shunyu Yao, Binghui Peng, Christos Papadimitriou, and Karthik Narasimhan.
\newblock Self-attention networks can process bounded hierarchical languages.
\newblock In \emph{Proceedings of the 59th Annual Meeting of the Association for Computational Linguistics and the 11th International Joint Conference on Natural Language Processing (ACL-IJCNLP)}, pages 3770--3785, 2021.
\newblock \doi{10.18653/v1/2021.acl-long.292}.

\end{thebibliography}

\appendix

\section{Equivalence of \Strtostatestitle{}}\label{sec:same_states_proof}

\stateEquivalence*
\begin{proof}%
First we show the equivalence of state sequences defined by \uhats and $\LTL$, and then equivalence of $\LTL$ and cfDFAs.

The essential observation \citep[Lemma~22]{yang-etal-2024-masked} is that the output at every position of every \uhat layer comes from a finite set $Q \subseteq \mathbb{R}^d$. So we can think of a \uhat{} as a function $\mathcal{T} \colon \alphabet^* \to Q^*$.
For each $\statey \in Q$, we can construct an \LTL{} formula $\phi_\statey$ such that $\mathcal{T}(\str)_{i} = \statey \iff \str,i\models \phi_{\statey}$ \citep[Theorems~2, 4]{yang-etal-2024-masked}.
So there exists a tuple of $\LTL$ formulas $(\phi_{\statey})_{\statey \in Q}$ that defines a \strtostate{} equivalent to $\mathcal{T}$. 
Note that the state outputted by $\mathcal{T}$ on the prepended $\bos$ symbol can be simulated using a $\bos$ formula in the tuple. 

In the other direction, for every tuple of $\LTL$ formulas $(\phi_{1},\phi_{2},\ldots, \phi_{m})$ defining a \strtostate{} $\alphabet^*\to \B^m$, there exists a \uhat $\mathcal{T}\colon \alphabet^*\to (\R^d)^*$ defining an equivalent \strtostate. For each $\phi_k$, we construct a transformer $\mathcal{T}_k$ which outputs $\frac{1}{2}$ if $\str,i\models \phi_k$ and $-\frac{1}{2}$ otherwise \citep[Theorems~1, 3]{yang-etal-2024-masked}.
Then we can parallel-compose all the $\mathcal{T}_k$ into a single $\mathcal{T}$ \citep[Lemma~25]{yang-etal-2024-masked}, and add an additional layer which projects the output dimensions of each $\mathcal{T}_k$ into a single output vector $\R^m$ such that $\mathcal{T}(\str)_{i} =\mathbf{e}_k\iff \str,i\models \phi_{k}$.

The equivalence between $\LTL$ and cfDFAs can be described a little more succinctly. 
Given a DFA $\automaton=(\alphabet, Q,\trans, \iota)$, for each state $q\in Q$ there exists a formula $\phi_q$ such that $\str\models\phi_q\iff \trans(\iota,\str)=q$, due to the expressive equivalence of $\LTL$ and cfDFAs \citep{schutzenberger:1965,mcnaughtonpapert1971,kamp:1968}.
The tuple $(\phi_q)_{q \in Q}$ then defines a \strtostate{} equivalent to $\automaton$.
In the other direction, given a tuple of $\LTL$ formulas $(\phi_1,\ldots,\phi_m)$, for each $k \in [m]$ there is an automaton $\automaton_k$ that recognizes the same language as $\phi_k$. 
Then the Cartesian product of all the $\automaton_k$ defines a \strtostate{} equivalent to $(\phi_1, \ldots, \phi_m)$. 
\end{proof}

\section{Autoregressive Model Proofs}

\subsection{Proof of \Cref{thm:ltl_next}}
\label{sec:ltl_next_proof}

\begin{restatable}{lemma}{ltlNext}\label{thm:ltl_next}
There is a transformation $\nextsym{\sym}$ from formulas of $\TL[\mathcal{O}]$ to formulas of $\TL[\mathcal{O}]$ such that for any formula $\phi$ of $\TL[\mathcal{O}]$ and for all $\str \in \kleene{\alphabet}$,
\begin{equation}
\str \models \nextsym{\sym}(\phi) \iff \str \sym \models \phi. \label{eq:next_sem}
\end{equation}
\end{restatable}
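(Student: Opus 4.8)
The plan is to define the transformation $\nextsym{\sym}$ by structural induction on $\phi$ and to verify \eqref{eq:next_sem} by a parallel induction. Writing $n = |\str|$, recall that $\str \models \nextsym{\sym}(\phi)$ abbreviates $\str, n \models \nextsym{\sym}(\phi)$ while $\str\sym \models \phi$ abbreviates $\str\sym, n+1 \models \phi$, so the goal is to re-express evaluation of $\phi$ at the freshly appended final position $n+1$ (whose symbol is known to be $\sym$) as evaluation of a transformed formula at position $n$ of the original string. The one auxiliary fact I would isolate first is \emph{past-determinacy}: for every $\TL[\mathcal{O}]$ formula $\psi$ and every position $i \le n$, we have $\str\sym, i \models \psi \iff \str, i \models \psi$. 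This holds because the semantics of the past operators at position $i$ inspect only positions $\le i$, and $\str\sym$ agrees with $\str$ on all positions $\le n$; it is proved by a routine induction on $\psi$.

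With this in hand, the Boolean and atomic cases are immediate: set $\nextsym{\sym}(\bos) = \false$ (position $n+1$ is never the start), $\nextsym{\sym}(\sym') = \true$ if $\sym' = \sym$ and $\false$ otherwise (the appended symbol is fixed), $\nextsym{\sym}(\lnot\phi_1) = \lnot \nextsym{\sym}(\phi_1)$, and $\nextsym{\sym}(\phi_1 \land \phi_2) = \nextsym{\sym}(\phi_1) \land \nextsym{\sym}(\phi_2)$. The Yesterday case is the cleanest: since $\tlY\phi_1$ at position $n+1$ asks whether $\phi_1$ holds at position $n$, and past-determinacy identifies $\str\sym, n \models \phi_1$ with $\str, n \models \phi_1$, I would simply set $\nextsym{\sym}(\tlY\phi_1) = \phi_1$, which needs no recursive transformation at all.

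The two remaining cases carry the real content. For Historically, $\tlH\phi_1$ at $n+1$ requires $\phi_1$ at every $j \le n+1$; splitting the top position $n+1$ off from the range $j \le n$ and applying past-determinacy to the latter yields $\nextsym{\sym}(\tlH\phi_1) = \tlH\phi_1 \land \nextsym{\sym}(\phi_1)$, where the unshifted conjunct (evaluated at $n$) captures the prefix and the shifted conjunct captures the new position. The main obstacle I anticipate is the Since case, which needs a careful case analysis on the location of the witness position $j$ for $\phi_2$ in $\phi_1 \tlS \phi_2$ evaluated at $n+1$. If $j = n+1$, the universal $\phi_1$-constraint is vacuous and the clause reduces to $\phi_2$ holding at $n+1$, contributing $\nextsym{\sym}(\phi_2)$. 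If instead $j \le n$, then the constraint over $j < j' \le n+1$ separates into $\phi_1$ at the new position $n+1$ together with $\phi_1 \tlS \phi_2$ holding at $n$, contributing $\nextsym{\sym}(\phi_1) \land (\phi_1 \tlS \phi_2)$ after using past-determinacy. Taking the disjunction gives
\[
\nextsym{\sym}(\phi_1 \tlS \phi_2) = \nextsym{\sym}(\phi_2) \lor \big(\nextsym{\sym}(\phi_1) \land (\phi_1 \tlS \phi_2)\big).
\]

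Finally, I would observe that the construction stays inside $\TL[\mathcal{O}]$: the atomic and Yesterday cases introduce no temporal operators, and the Historically and Since cases reuse only $\tlH$ and $\tlS$ respectively (both assumed to lie in $\mathcal{O}$), with all shifted subformulas in $\TL[\mathcal{O}]$ by the induction hypothesis. The equivalence \eqref{eq:next_sem} then follows case by case from exactly these decompositions, invoking the induction hypothesis on each $\nextsym{\sym}(\phi_k)$ and past-determinacy on each unshifted subformula.
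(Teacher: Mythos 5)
Your proposal is correct and takes essentially the same approach as the paper: the identical recursive definition of $\nextsym{\sym}$ in every case (including $\nextsym{\sym}(\tlH\phi_1) = \tlH\phi_1 \land \nextsym{\sym}(\phi_1)$ and $\nextsym{\sym}(\phi_1 \tlS \phi_2) = \nextsym{\sym}(\phi_2) \lor (\nextsym{\sym}(\phi_1) \land (\phi_1 \tlS \phi_2))$), verified by the same structural induction. The only difference is that you explicitly isolate the past-determinacy fact (satisfaction at position $i$ is unaffected by appending $\sym$), which the paper's induction uses implicitly; this is a point of rigor, not a different argument.
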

Intuitively, $\nextsym{\sym}$ removes a $\sym$ on the right; in other words, $\nextsym{\sym}(\phi)$ defines the right Brzozowski derivative \citep{brzozowski1964derivatives} of the language defined by $\phi$.

\begin{proof}
We define $\nextsym{\sym}$ recursively:
\begin{subequations}
\begin{align}
\nextsym{\sym}(\sym) &= \true \label{eq:next_a} \\
\nextsym{\sym}(\sym') &= \false \qquad \text{if $\sym' \ne \sym$} \label{eq:next_b} \\
\nextsym{\sym}(\sympred{\bos}) &= \false \label{eq:next_bos} \\
\nextsym{\sym}(\neg \phi) &= \neg \nextsym{\sym}(\phi) \label{eq:next_not} \\
\nextsym{\sym}(\phi_1 \wedge \phi_2) &= \nextsym{\sym}(\phi_1) \wedge \nextsym{\sym}(\phi_2) \label{eq:next_and} \\
\nextsym{\sym}(\tlY \phi) &= \phi \label{eq:next_Y} \\
\nextsym{\sym}(\tlH \phi) &= \tlH \phi  \wedge \nextsym{\sym}(\phi) \label{eq:next_H} \\
\nextsym{\sym}(\phi_1 \tlS \phi_2) &= (\nextsym{\sym}(\phi_1) \wedge (\phi_1 \tlS \phi_2)) \vee \nextsym{\sym}(\phi_2). \label{eq:next_S}
\end{align}
\end{subequations}
Note that $\nextsym{\sym}$ never translates a temporal operator into another temporal operator, so it translates formulas of $\mathsf{TL}[\mathcal{O}]$ into formulas of $\mathsf{TL}[\mathcal{O}]$ for any $\mathcal{O}$.

Next, we prove that $\nextsym{\sym}(\phi)$ satisfies \cref{eq:next_sem} by induction on the structure of $\phi$. 

\begin{subequations}
\paragraph{Base Cases.}
If $\phi = \sym$: 
\begin{align}
\str, i \models \nextsym{\sym}(\sym)  
&\iffby{eq:next_a} \str \models \true \\
&\iffby{eq:models_sym} \str\sym \models \sym.
\end{align}
If $\phi = \sym'$ for $\sym' \ne \sym$: 
\begin{align}
\str \models \nextsym{\sym}(\sym')  
&\iffby{eq:next_b} \str \models \false \\
&\iffby{eq:models_sym} \str\sym \models \sym'.
\end{align}
Similarly, if $\phi = \bos$:
\begin{align}
  \str \models \nextsym{\sym}(\bos) 
  &\iffby{eq:next_bos} \str \models \false \\
  &\iffby{eq:models_bos}\str\sym \models \bos.
\end{align}
\end{subequations}

\begin{subequations}
\paragraph{Inductive Cases.} If $\phi = \neg \phi_1$:
\begin{align}
\str \models \nextsym{\sym}(\lnot \phi_1) &\iffby{eq:next_not} \str \models \lnot \nextsym{\sym}(\phi_1) \\
&\iffby{eq:models_not} \str \not \models  \nextsym{\sym}(\phi_1) \\
&\iffbyih \str\sym \not \models \phi_1 \\
&\iffby{eq:models_not}\str\sym \models \lnot \phi_1.
\end{align}
If $\phi = \phi_1 \land \phi_2$:
\begin{align}
\str \models \nextsym{\sym}(\phi_1 \wedge \phi_2)
&\iffby{eq:next_and} \str \models \nextsym{\sym}(\phi_1) \wedge \nextsym{\sym}(\phi_2) \\
&\iffby{eq:models_and} (\str \models \nextsym{\sym}(\phi_1)) \wedge (\str \models \nextsym{\sym}(\phi_2)) \\
&\iffbyih (\str\sym \models \phi_1) \wedge (\str\sym \models \phi_2) \\
&\iffby{eq:models_and}\str\sym \models \phi_1 \wedge \phi_2.
\end{align}
If $\phi = \tlY \phi_1$:
\begin{align}
   \str \models \nextsym{\sym}(\tlY \phi_1) 
   &\iffby{eq:next_Y} \str \models  \phi_1  \\
   &\iffby{eq:models_Y}\str\sym  \models\tlY \phi_1.
\end{align}
If $\phi = \tlH \phi_1$:
\begin{align}
   \str \models \nextsym{\sym}(\tlH \phi_1)
  &\iffby{eq:next_H} \str  \models\tlH \phi_1  \wedge \nextsym{\sym}(\phi) \\
  &\iffby{eq:models_and} (\str \models\tlH \phi_1) \land (\str \models \nextsym{\sym}(\phi)) \\
  &\iffbyih (\str  \models\tlH \phi_1)  \wedge (\str\sym \models \phi_1) \\
  &\iffby{eq:models_H} \str\sym  \models\tlH \phi_1.
\end{align}
If $\phi = \phi_1 \tlS \phi_2$:
\begin{align}
   \str \models \nextsym{\sym}(\phi_1 \tlS \phi_2) \hspace{-1in} \notag \\
    &\iffby{eq:next_S} \str \models (\nextsym{\sym}(\phi_1) \wedge (\phi_1 \tlS \phi_2)) \vee \nextsym{\sym}(\phi_2) \\
    &\iffby{eq:models_and,eq:models_not} (\str \models \nextsym{\sym}(\phi_1) \wedge (\str \models \phi_1 \tlS \phi_2)) \vee (\str \models \nextsym{\sym}(\phi_2)) \\
    &\iffbyih ((\str\sym \models \phi_1) \wedge (\str \models \phi_1 \tlS \phi_2)) \vee (\str\sym \models \phi_2) \\
    &\iffby{eq:models_S} \str\sym  \models \phi_1 \tlS \phi_2.
\end{align}
\end{subequations}
\end{proof}

\subsection{Proof of \Cref{thm:ltl_prefix}}
\label{sec:ltl_prefix_proof_dfa}

\begin{restatable}{lemma}{ltlRel} \label{thm:ltl_prefix}
    There is a transformation $\pref$ from formulas of $\TL[\mathcal{O}]$ to formulas of $\TL[\mathcal{O}]$ such that for any formula $\phi$ of $\TL[\mathcal{O}]$ and for all $\stru \in \kleene{\alphabet}$,
\begin{equation}
\stru \models \pref(\phi) \iff \text{there exists $\strv \in \kleene{\alphabet}$ such that $\stru\strv \models \phi$}. \label{eq:def_pref}
\end{equation}
\end{restatable}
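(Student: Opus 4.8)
The plan is to define $\pref(\phi)$ as a finite disjunction of iterated right Brzozowski derivatives of $\phi$, reusing the transformation $\nextsym{\sym}$ from \cref{thm:ltl_next}. For a string $w = \sym_1 \cdots \sym_k$ I write $\nextsym{w}(\phi) = \nextsym{\sym_1}(\cdots \nextsym{\sym_k}(\phi) \cdots)$, with $\nextsym{\epsilon}(\phi) = \phi$. Iterating \cref{eq:next_sem} gives $\stru \models \nextsym{w}(\phi) \iff \stru w \models \phi$, so $\nextsym{w}(\phi)$ defines the right quotient of the language $L(\phi)$ of $\phi$ by $w$. Since $\nextsym{\sym}$ maps $\TL[\mathcal{O}]$ into $\TL[\mathcal{O}]$, every $\nextsym{w}(\phi)$ is again a $\TL[\mathcal{O}]$ formula, which is exactly the property I must preserve.

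The key observation is that the right-hand side of \cref{eq:def_pref} is precisely the union of all these right quotients: $\stru$ is a prefix of some string in $L(\phi)$ iff $\stru w \models \phi$ for some $w$, iff $\stru \models \nextsym{w}(\phi)$ for some $w \in \kleene{\alphabet}$. A priori this is an infinite disjunction. The crucial step is to cut it down to a finite one, using the fact that $\TL[\mathcal{O}]$ formulas define only regular (indeed aperiodic) languages, so $L(\phi)$ has only finitely many distinct right quotients. Hence there is a finite set $W \subseteq \kleene{\alphabet}$ of representatives such that every $\nextsym{w}(\phi)$ with $w \in \kleene{\alphabet}$ is language-equivalent to some $\nextsym{w'}(\phi)$ with $w' \in W$. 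I then define
\[
\pref(\phi) = \bigvee_{w \in W} \nextsym{w}(\phi),
\]
a finite disjunction, which is a $\TL[\mathcal{O}]$ formula because disjunction is syntactic sugar over $\lnot$ and $\land$ and each disjunct already lies in $\TL[\mathcal{O}]$.

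Verifying \cref{eq:def_pref} is then immediate: $\stru \models \pref(\phi)$ iff $\stru \models \nextsym{w}(\phi)$ for some $w \in W$, iff $\stru \models \nextsym{w}(\phi)$ for some $w \in \kleene{\alphabet}$ (by the choice of $W$), iff $\stru w \models \phi$ for some $w$, iff there exists $\strv$ with $\stru\strv \models \phi$, as required.

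I expect the main obstacle to be the finiteness step: justifying that only finitely many right quotients occur and extracting the finite representative set $W$. This rests on the fact that $\TL[\mathcal{O}]$ formulas define regular languages (a consequence of the characterizations cited earlier) together with closure of regularity under reversal, so the right quotients correspond to the finitely many Myhill--Nerode classes of the reversed language. Since the lemma only asserts \emph{existence} of the transformation, a non-uniform choice of $W$ per formula suffices and no explicit construction is needed, though one could in principle read $W$ off the minimal DFA of $L(\phi)$.
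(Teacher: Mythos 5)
Your proposal is correct, but it takes a genuinely different route from the paper's. The paper proves \cref{thm:ltl_prefix} by building a DFA directly from the subformula closure of $\phi$: states are sets $\Psi \subseteq \mathrm{cl}(\phi)$ of subformulas, the transition relation is given by a syntactic recursion that mirrors the clauses of $\nextsym{\sym}$, and an induction shows the state reached on $\str$ is exactly $\{\chi \in \mathrm{cl}(\phi) \mid \str \models \chi\}$; making every co-accessible state accepting yields a DFA for the prefix language, and $\pref(\phi)$ is the disjunction, over accepting states $\Psi$, of $\bigwedge_{\chi\in\Psi}\chi \land \bigwedge_{\chi\in\mathrm{cl}(\phi)\setminus\Psi}\lnot\chi$ --- a Boolean combination of subformulas of $\phi$, hence still in $\TL[\mathcal{O}]$. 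You instead iterate the derivative transformation of \cref{thm:ltl_next} and take $\pref(\phi) = \bigvee_{w \in W}\nextsym{w}(\phi)$ for a finite set $W$ of representatives of the right quotients of $L(\phi)$, getting finiteness from the regularity of $\TL[\mathcal{O}]$-definable languages (an external fact, but one the paper itself relies on via the Kamp and McNaughton--Papert equivalences, and there is no circularity since those results do not depend on this lemma). Both arguments are sound, and your verification of \cref{eq:def_pref} only needs language equivalence of derivatives, which your choice of $W$ supplies. What each buys: your proof is shorter and inherits fragment preservation directly from \cref{thm:ltl_next} rather than re-verifying it, but it is non-constructive as stated (bare existence of $W$) and gives no useful size bound, whereas the paper's subset construction is effective and yields the explicit exponential bound that the surrounding discussion (\cref{cor:ltl_tails} and the remark before it) depends on. A final observation: you could avoid the appeal to regularity entirely by noting that every iterated derivative $\nextsym{w}(\phi)$ is semantically a Boolean combination of the finitely many subformulas of $\phi$ (this follows by induction from the clauses of $\nextsym{\sym}$), so only finitely many derivatives are pairwise inequivalent; this observation is precisely what the paper's closure-DFA makes explicit.
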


\newcommand{\tltrans}[1]{\Psi \xrightarrow{\sym} #1}
\newcommand{\cl}{\mathrm{cl}}

\begin{proof}
Given a formula $\phi$ of $\mathsf{TL}[\mathcal{O}]$, let $\cl(\phi)$ be the set of all subformulas of $\phi$ (including $\phi$ itself). Construct a DFA $M_\phi = (2^{\cl(\phi)}, \alphabet, \trans, \initstate, \finalf)$, where
\begin{align}
\initstate &= \{ \chi \in \cl(\phi) \mid \epsilon \models \chi \} \\
\finalf &= \{ \Psi \subseteq \cl(\phi) \mid \phi \in \Psi \} \\
\delta(\Psi, \sym) &= \{ \chi \in \cl(\phi) \mid \tltrans \chi \}
\end{align}
where the relation $\tltrans{\chi}$, which intuitively means that if a string $\str$ satisfies exactly the formulas in $\Psi$, then $\str\sigma$ satisfies $\chi$, is defined as follows:
\begin{subequations}
\begin{align}
\tltrans{\sympred{\sym'}} & \text{ iff $\sym=\sym'$} \label{eq:rel_sym}\\
\tltrans{\chi_1 \land \chi_2} & \text{ iff $\tltrans{\chi_1}$ and $\tltrans{\chi_2}$} \label{eq:rel_and}\\
\tltrans{\lnot \chi} & \text{ iff not $\tltrans \chi$} \label{eq:rel_not}\\
\tltrans{\tlY \chi} & \text{ iff $\chi \in \Psi$} \label{eq:rel_Y}\\
\tltrans{\tlH \chi} & \text{ iff $\tlH \chi \in \Psi$ and $\tltrans \chi$} \label{eq:rel_H}\\
\tltrans{\chi_1 \tlS \chi_2} & \text{ iff ($\chi_1 \tlS \chi_2 \in \Psi$ and $\tltrans{\chi_1}$) or $\tltrans{\chi_2}$} \label{eq:rel_S}.
\end{align}
\end{subequations}
\begin{claim}
For any $\str \in \alphabet^*$, if $\Psi = \{ \chi \in \cl(\phi) \mid \str \models \chi \}$, then $\tltrans{\chi} \iff \str\sym \models \chi$.
\end{claim}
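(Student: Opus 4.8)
The plan is to prove the claim by structural induction on $\chi$, after first isolating the one semantic fact that makes the whole construction work: past $\LTL$ is \emph{causal}, meaning $\str, i \models \chi$ depends only on the symbols at positions $0, \ldots, i$. This is a short separate induction on $\chi$ — all three temporal operators $\tlY$, $\tlH$, $\tlS$ quantify only over positions $j \le i$ — and it yields the consequence I will use repeatedly: since $\str$ and $\str\sym$ agree at positions $0, \ldots, |\str|$, for every subformula $\chi$ and every $j \le |\str|$ we have $\str, j \models \chi \iff \str\sym, j \models \chi$. In particular, membership in $\Psi$ records the satisfaction of subformulas at the last position of \emph{both} $\str$ and $\str\sym$ (restricted to positions $\le |\str|$).

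With causality in hand, write $n = |\str|$, so the last position of $\str\sym$ is $n+1$ and $w_{n+1} = \sym$. I would then check each clause defining $\tltrans{\cdot}$ against the matching semantic clause, applying the induction hypothesis to proper subformulas (which also lie in $\cl(\phi)$). The Boolean cases $\lnot$ and $\land$ and the atomic case $\sympred{\sym'}$ are immediate; the case $\sympred{\bos}$ needs only that $\str\sym, n+1 \models \sympred{\bos}$ is false, since $n+1 \ne 0$, matching the convention that no clause makes $\tltrans{\sympred{\bos}}$ hold. For $\tlY \chi_1$, clause \cref{eq:rel_Y} tests $\chi_1 \in \Psi$, i.e.\ $\str, n \models \chi_1$, which by causality equals $\str\sym, n \models \chi_1$, which is exactly $\str\sym, n+1 \models \tlY \chi_1$ by \cref{eq:models_Y}.

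The two accumulating operators $\tlH$ and $\tlS$ are where the real content lies, and $\tlS$ is the crux. The idea is to split $\str\sym, n+1 \models \chi_1 \tlS \chi_2$ according to whether the witnessing position for $\chi_2$ is $n+1$ itself or lies in $\{0, \ldots, n\}$. If the witness is $n+1$, the requirement collapses to $\str\sym, n+1 \models \chi_2$, reproduced by the disjunct $\tltrans{\chi_2}$ via the induction hypothesis. If the witness is some $j \le n$, then the witness and all intermediate positions $j < j' \le n$ lie in $\{0, \ldots, n\}$, so by causality the since-condition restricted to those positions is precisely $\str, n \models \chi_1 \tlS \chi_2$, i.e.\ $\chi_1 \tlS \chi_2 \in \Psi$; the single remaining position $n+1$ contributes $\str\sym, n+1 \models \chi_1$, i.e.\ $\tltrans{\chi_1}$. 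The union of the two cases is exactly clause \cref{eq:rel_S}. The $\tlH$ case is the same split in miniature: $\str\sym, n+1 \models \tlH \chi_1$ iff $\chi_1$ holds at all positions $\le n$ (i.e.\ $\tlH \chi_1 \in \Psi$, by causality) and at $n+1$ (i.e.\ $\tltrans{\chi_1}$), matching \cref{eq:rel_H}.

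The main obstacle is the bookkeeping in the $\tlS$ case: cleanly separating the boundary position $n+1$ from the earlier positions and invoking causality only where it legitimately applies, namely at positions $\le n$. Everything else is routine once causality is established, and it is worth stating that lemma explicitly, since it is the single point where the past-only nature of the operators is essential — it is exactly what allows a left-to-right DFA to recover the satisfaction of $\chi$ on $\str\sym$ from the finite summary $\Psi$ of $\str$.
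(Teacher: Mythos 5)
Your proof is correct and takes essentially the same route as the paper's: a structural induction on $\chi$ that matches each clause of the transition relation against the corresponding semantic clause, with the $\tlS$ (and $\tlH$) cases resolved by splitting on whether the witnessing position is the new last position or an earlier one. The only differences are presentational: you state explicitly the causality fact (satisfaction at positions $\le |\str|$ is unaffected by appending $\sym$) that the paper uses silently in its clause-by-clause steps, and you also cover the atomic $\bos$ case, which the paper's case analysis omits.
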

\begin{proof} By induction on the structure of $\chi$. 
Note by definition that $\chi\in\Psi\iff \str\models\chi$.
\begin{align*}
\tltrans{\sympred{\sym'}} &\iffby{eq:rel_sym} \sym = \sym' \\
&\iffby{eq:models_sym} \str\sym \models \sympred{\sym'}. \\
\tltrans{\chi_1 \land \chi_2} 
&\iffby{eq:rel_and} \tltrans{\chi_1}\ \text{and}\ \tltrans{\chi_2} \\ &\iffbyih \str\sym \models \chi_1\ \text{and}\ \str\sym \models \chi_2 \\
&\iffby{eq:models_and}\str\sym \models \chi_1 \land \chi_2. \\
\tltrans{\lnot \chi} 
&\iffby{eq:rel_not} \text{not}\ \tltrans{\chi} \\ &\iffbyih\text{not}\ \str\sym \models \chi \\
&\iffby{eq:models_not} \str\sym \models \lnot \chi. \\
\tltrans{\tlY \chi} &\iffby{eq:rel_Y} \chi \in \Psi \\
&\iff \str \models \chi \\
&\iffby{eq:models_Y} \str\sym \models \tlY \chi. \\
\tltrans{\tlH \chi} &\iffby{eq:rel_H} \tlH \chi \in \Psi\ \text{and}\ \tltrans{\chi} \\
&\iffbyih \str \models \tlH \chi\ \text{and}\ \str\sym \models \chi \\
&\iffby{eq:models_H} \str\sym \models \tlH. \\
\tltrans{\chi_1 \tlS \chi_2} &\iffby{eq:rel_S} (\chi_1 \tlS \chi_2 \in \Psi\ \text{and}\ \tltrans{\chi_1})\ \text{or}\ \tltrans{\chi_2} \\
&\iffbyih (\str \models \chi_1 \tlS \chi_2\ \text{and}\ \str\sym \models \chi_1)\ \text{or}\ \str\sym \models \chi_2 \\
&\iffby{eq:models_S} \str\sym \models \chi_1 \tlS \chi_2. \tag*{\qedhere}
\end{align*}
\end{proof}

\begin{claim} \label{thm:tl_to_dfa_prefix}
For any $\str$, $\delta(\initstate, \str) = \{ \chi \in \cl(\phi) \mid \str \models \chi \}$.
\end{claim}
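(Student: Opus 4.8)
The plan is to prove \cref{thm:tl_to_dfa_prefix} by induction on the length of $\str$, with the preceding one-step Claim serving as the inductive engine. That Claim already contains all the nontrivial content: it verifies, by a case analysis over the syntactic form of $\chi$, that a single transition of $M_\phi$ correctly updates the set of satisfied subformulas when one symbol is appended. So the remaining task is merely to lift this per-symbol correctness to arbitrary strings, which is purely mechanical.

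For the base case $\str = \epsilon$, we have $\delta(\initstate, \epsilon) = \initstate$, and by the definition $\initstate = \{\chi \in \cl(\phi) \mid \epsilon \models \chi\}$, so both sides agree. For the inductive step, I would write $\str = \str'\sym$, peeling off the \emph{last} symbol, and use the right-append identity $\delta(\initstate, \str'\sym) = \delta(\delta(\initstate, \str'), \sym)$ for the extended transition function. The inductive hypothesis gives $\delta(\initstate, \str') = \Psi$, where $\Psi = \{\chi \in \cl(\phi) \mid \str' \models \chi\}$. By the definition of $\delta$ we have $\delta(\Psi, \sym) = \{\chi \in \cl(\phi) \mid \tltrans{\chi}\}$, and the preceding one-step Claim — whose hypothesis $\Psi = \{\chi \mid \str' \models \chi\}$ is exactly what the inductive hypothesis supplies — yields $\tltrans{\chi} \iff \str'\sym \models \chi$. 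Combining, $\delta(\initstate, \str'\sym) = \{\chi \in \cl(\phi) \mid \str'\sym \models \chi\} = \{\chi \in \cl(\phi) \mid \str \models \chi\}$, closing the induction.

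The one point requiring care — and the only place where the argument could go wrong — is the direction in which the string is consumed. The extended transition $\delta^*$ is defined by recursion on the \emph{first} symbol, whereas both the one-step Claim and the inductive hypothesis want to append the newest symbol on the \emph{right}. I would therefore first record the standard right-append identity $\delta(\initstate, \str'\sym) = \delta(\delta(\initstate, \str'), \sym)$, which follows from the recursive definition by a routine auxiliary induction on $|\str'|$, and only then run the main induction. Once this identity is in place the directions line up, and the remainder is immediate. This Claim then completes the construction in \cref{thm:ltl_prefix}: taking the appropriate disjunction over the states of $M_\phi$ from which $\phi$ can still be satisfied, and converting the resulting automaton back into a $\TL[\mathcal{O}]$ formula via \cref{thm:same_states}, gives the desired $\pref(\phi)$.
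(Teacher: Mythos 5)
Your proof is correct and follows essentially the same route as the paper's: induction on $|\str|$, with the base case given by the definition of $\initstate$ and the inductive step combining the right-append identity $\delta(\initstate,\str'\sym)=\delta(\delta(\initstate,\str'),\sym)$ with the preceding one-step claim. Your explicit remark that this identity requires a routine auxiliary induction (since $\delta^*$ is defined by recursion on the first symbol) is a small point of added rigor that the paper elides, but it does not change the argument.
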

\begin{proof}
By induction on the length of $\str$.

Base case: $\delta(\initstate, \epsilon) = \initstate = \{ \chi \mid \epsilon \models \chi\}$.

Inductive step: Assume that $\delta(\initstate, \str) = \{ \chi \mid \str \models \chi\} = \Psi$.
Then 
\begin{align*}
\delta(\initstate, \str) &= \delta(\delta(\initstate, \str), \sym) \\
&= \delta(\Psi, \sym) \\
&= \{\chi \mid \tltrans{\chi}\} \\
&= \{\chi \mid \str\sym \models \chi\}. \tag*{\qedhere}
\end{align*}
\end{proof}

\begin{claim}\label{claim:same-language}
$M_\phi$ defines the same language as $\phi$.
\end{claim}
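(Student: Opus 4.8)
The plan is to reduce the claim to \cref{thm:tl_to_dfa_prefix}, which already pins down the state that $M_\phi$ reaches after reading an arbitrary string. Recall that, as a Boolean classifier, the DFA $M_\phi$ accepts a string $\str$ exactly when $\delta^*(\initstate, \str) \in \finalf$, so proving ``$M_\phi$ defines the same language as $\phi$'' amounts to establishing the equivalence $\delta^*(\initstate, \str) \in \finalf \iff \str \models \phi$ for every $\str \in \kleene{\alphabet}$, using the convention $\str \models \phi \defeq \str, |\str| \models \phi$ for the logic.

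First I would fix an arbitrary $\str \in \kleene{\alphabet}$ and invoke \cref{thm:tl_to_dfa_prefix} to rewrite the reached state as the subformula set $\Psi_\str \defeq \{\chi \in \cl(\phi) \mid \str \models \chi\}$. By the definition $\finalf = \{\Psi \subseteq \cl(\phi) \mid \phi \in \Psi\}$, we have $\Psi_\str \in \finalf \iff \phi \in \Psi_\str$. Since $\phi$ is a subformula of itself, $\phi \in \cl(\phi)$, so $\phi \in \Psi_\str$ holds precisely when $\str \models \phi$ by the very definition of $\Psi_\str$. Chaining these equivalences yields $\str \in \langFun{M_\phi} \iff \str \models \phi$, which is exactly the desired statement once quantified over all $\str$.

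There is essentially no obstacle here: all the substantive content has already been discharged in the preceding two claims (the correctness of the transition relation $\tltrans{\cdot}$, and \cref{thm:tl_to_dfa_prefix}), and this final claim is only the bookkeeping step that assembles them with the definitions of $\initstate$ and $\finalf$. The two points I would double-check are that $\cl(\phi)$ is taken to include $\phi$ itself (so acceptance can be read off from whether $\phi$ lies in the reached state) and that the acceptance convention for $M_\phi$ lines up with ``satisfaction at the final position'' for $\phi$. Finally, I would note that counter-freeness of $M_\phi$ is not required for this equivalence; it becomes relevant only afterward, when $\pref(\phi)$ is extracted from $M_\phi$.
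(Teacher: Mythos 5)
Your proposal is correct and matches the paper's own proof, which is exactly the same chain of equivalences: $\delta(\initstate,\str)\in\finalf$ iff $\phi\in\{\chi\in\cl(\phi)\mid\str\models\chi\}$ (by \cref{thm:tl_to_dfa_prefix} and the definition of $\finalf$) iff $\str\models\phi$. The paper simply states this in one line; your additional sanity checks (that $\phi\in\cl(\phi)$ and that counter-freeness plays no role here) are accurate but not needed.
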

\begin{proof}
$\delta(\initstate, \str) \in \finalf$ if and only if $\phi \in \{\chi \mid \str \models \chi\}$ if and only if $\str \models \phi$.
\end{proof}

Then make every co-accessible state (every state that has a path to an accept state) into an accept state. Call this new DFA $M_\phi'$ with accept states~$\finalf'$.
This DFA recognizes the prefix language of $M_\phi$.
Finally, construct the formula \[\pref(\phi) = \bigvee_{\Psi \in \finalf'} \left(\bigwedge_{\chi \in \Psi} \chi \land \bigwedge_{\chi \in \cl(\phi) \setminus \Psi} \lnot \chi\right).\]
Note that $\pref$ never translates a temporal operator into another temporal operator, so it translates formulas of $\mathsf{TL}[\mathcal{O}]$ into formulas of $\mathsf{TL}[\mathcal{O}]$ for any $\mathcal{O}$.

\begin{claim}
The formula $\pref(\phi)$ defines the same language as $M_{\phi}'$.
\end{claim}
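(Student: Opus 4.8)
The plan is to show directly that for every string $\stru \in \kleene{\alphabet}$ we have $\stru \models \pref(\phi)$ if and only if $\stru$ is accepted by $M_\phi'$, i.e.\ $\delta(\initstate, \stru) \in \finalf'$. The whole argument rests on reading each disjunct of $\pref(\phi)$ as an exact description of a single DFA state, so that satisfaction of the formula coincides with landing in an accepting state.

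First I would observe that the inner conjunction $\bigwedge_{\chi \in \Psi} \chi \land \bigwedge_{\chi \in \cl(\phi) \setminus \Psi} \lnot \chi$ functions as a \emph{state indicator}: a string $\stru$ satisfies it precisely when the set of subformulas of $\phi$ that $\stru$ satisfies is exactly $\Psi$, that is, when $\{\chi \in \cl(\phi) \mid \stru \models \chi\} = \Psi$. Then I would invoke \Cref{thm:tl_to_dfa_prefix}, which identifies this set with the state reached by $M_\phi$, namely $\delta(\initstate, \stru) = \{\chi \in \cl(\phi) \mid \stru \models \chi\}$. Combining the two, satisfying the $\Psi$-indexed conjunct is equivalent to the condition $\delta(\initstate, \stru) = \Psi$.

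From here the two directions are immediate. For the forward direction, if $\stru \models \pref(\phi)$, then $\stru$ satisfies some disjunct indexed by a $\Psi \in \finalf'$, whence $\delta(\initstate, \stru) = \Psi \in \finalf'$ and $\stru$ is accepted by $M_\phi'$. For the reverse direction, if $\delta(\initstate, \stru) = \Psi \in \finalf'$, then by \Cref{thm:tl_to_dfa_prefix} the string $\stru$ satisfies exactly the formulas in $\Psi$ (and none of those in $\cl(\phi) \setminus \Psi$), so it satisfies the corresponding disjunct and hence $\pref(\phi)$. Because $M_\phi'$ is deterministic and $\delta$ is total, each $\stru$ reaches exactly one state, so at most one disjunct can be satisfied and the disjunction is unambiguous; in the degenerate case $\finalf' = \emptyset$ the empty disjunction $\pref(\phi) = \false$ matches the empty language of $M_\phi'$.

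I do not expect a genuine obstacle here: the formula $\pref(\phi)$ was engineered precisely so that each disjunct names a reachable state by its satisfied-subformula signature over $\cl(\phi)$. The only point requiring care is confirming that every string's signature is a subset of $\cl(\phi)$, so that the finite conjunctions are well-formed, and that \Cref{thm:tl_to_dfa_prefix} applies verbatim to convert ``$\stru$ satisfies exactly $\Psi$'' into ``$\delta(\initstate, \stru) = \Psi$.'' Once this bridge is in place, the claimed equivalence follows by unwinding the definition of $\finalf'$ as the accept states of $M_\phi'$, completing the proof of \Cref{thm:ltl_prefix}.
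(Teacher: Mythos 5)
Your proposal is correct and matches the paper's own argument: both rest on reading each disjunct of $\pref(\phi)$ as an exact indicator of the satisfied-subformula set, invoking \cref{thm:tl_to_dfa_prefix} (which still applies to $M_\phi'$ since only accept states, not transitions, were changed) to identify that set with $\delta(\initstate, \stru)$, and then unwinding the definition of $\finalf'$. The paper simply packages your two directions into a single chain of equivalences; the content is the same.
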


\begin{proof}
Since we only changed non-accept states to accept states, \cref{thm:tl_to_dfa_prefix} still applies to $M_\phi'$ and~$\phi$.
\begin{align*}
 \str \in \mathcal{L}(M'_{\phi}) &\iff \delta(\iota, \str) \in \finalf' \\
 &\iff \{ \chi \in \cl(\phi) \mid \str \models \chi \} \in \final' && \text{\cref{thm:tl_to_dfa_prefix}} \\
 &\iff \text{for some $\Psi \in \finalf'$, $\chi \in \Psi$ iff $\str \models \chi$} \\
 &\iff \text{for some $\Psi \in \finalf'$, $\str \models \bigwedge_{\chi \in \Psi} \chi \land \bigwedge_{\chi \in \cl(\phi) \setminus \Psi} \lnot \chi$} \\
 &\iff \str \models \bigvee_{\Psi \in \finalf'} \left(\bigwedge_{\chi \in \Psi} \chi \land \bigwedge_{\chi \in \cl(\phi) \setminus \Psi} \lnot \chi \right). %
\end{align*}
\newsavebox{\trash}
\sbox{\trash}{\qedhere}
\end{proof}

This completes the proof of \cref{thm:ltl_prefix}.
\end{proof}

\subsection{Relationship Between Classifiers and Autoregressors}\label{sec:ltl_equiv_altl_proof}

\ltlEquivaltl*
\begin{proof}%

\labelcref{thm:ltl_to_altl} A Boolean-weighted $\mathsf{TL}[\mathcal{O}]$ classifier is defined by a tuple of formulas $\Phi=(\phi_1,\ldots,\phi_m)$ inducing a state encoder and an output function $c\colon \B^{m}\to \B$.
We may think of $c$ as a Boolean combination of its arguments, and substitute the $\phi_i$ into it to obtain a single formula $\phi = c(\phi_1, \ldots, \phi_m)$. Then define a new trivial output function $c'(\statescl) = \statescl$, so that the classifier $(\phi, c')$ defines the same language as $(\Phi, c)$.

Define
\begin{align}
\phi_\sym &= \nextsym{\sym}(\pref(\phi)) \qquad \text{for $\sym \in \alphabet$} \\
\phi_\eos &= \phi.
\end{align}
Then the tuple of formulas $\Phi'=(\phi_\sym)_{\sym \in \alphabet \cup \{\eos\}}$ defines a \strtostate{}. We define the autoregressive output function 

\begin{align}
\arout\colon \B^{|\alphabet|+1}&\to \B^{|\alphabet|+1} \\
    \arout(\statevec)(\sym)&=\begin{cases}
    \statescl_\sym & \text{if any entry of $\statevec$ is true} \\
    \top & \text{if all entries of $\statevec$ are false.}
    \end{cases}
    \label{eq:atl_def}
\end{align}
A vector $\statevec$ whose entries are all false is unreachable, so it does not matter what we set $\arout(\statevec)$ to, but it must satisfy $\bigoplus_\sym \arout(\statevec)(\sym) = \one$ (\cref{eq:sym_probability}), that is, the entries of $\arout(\statevec)$ cannot all be false.

The autoregressor $\autoregressor = (\Phi', \arout)$ defines $L$, because for any $\str \in L$ with length $n$, we have
\begin{align}
\strprob{\autoregressor}{\str} &=  \bigotimes_{i=1}^{n} \symprob{\autoregressor}{\str_{<i}}{w_i} \otimes \symprob{\autoregressor}{\str}{\eos}\\
&= \bigwedge_{i=1}^{n}\ind{\str_{<i} \models \nextsym{w_i}(\pref(\phi)} \land \ind{\str \models \phi} \\
&= \bigwedge_{i=1}^{n}\ind{\str_{\le i} \models \pref(\phi)} \land \ind{\str \models \phi} \\
&= \bigwedge_{i=1}^{n}\ind{\text{$\str_{\le i}\strv \models \phi$ for some $\strv$}} \land \ind{\str \models \phi} \\
&= \top.
\end{align}
On the other hand, for any $\str \not\in L$, let $k$ be the greatest integer such that $\str_{<k} \strv \in L$ for some $\strv$. (We know that $k$ exists because $L$ is nonempty by assumption.)
Then we have that $\str_{<k} \not \models \nextsym{w_k}(\pref(\phi))$, but 
$\str_{<k} \models \nextsym{v_1}(\pref(\phi))$.
So $\symprob{\autoregressor}{\str_{<k}}{w_k} = \bot$, and therefore $\strprob{\autoregressor}{\str} 
= \bot$.

It remains to verify that $\autoregressor$ satisfies \cref{eq:suffix_probability}. 
For any $\stru \in \alphabet^*$, we want to show that 
\begin{align}
\bigoplus_{\strv} \sufprob{\autoregressor}{\stru}{\strv} &= \bigoplus_{\strv} \left( \bigotimes_{i=1}^{|\strv|} \symprob{\autoregressor}{\stru\strv_{<i}}{v_i} \otimes \symprob{\autoregressor}{\stru\strv}{\eos}\right) = \one. \label{eq:summation}
\end{align}
In the Boolean semiring, it suffices to show that at least one term of this summation is true.

If there is a $\strv$ such that $\stru\strv \in L$, then the corresponding term of the summation is
\begin{align}
&\left(\bigwedge_{i=1}^{|\stru\strv|} \ind{\stru\strv_{<i} \models \nextsym{v_i}(\pref(\phi)) }\right) \land \ind{\stru\strv \models \phi} \\
&= \left( \bigwedge_{i=1}^{|\stru\strv|} \ind{\stru\strv_{\le i} \models \pref(\phi) } \right) \land \ind{\stru\strv \models \phi}  \\
&= \left( \bigwedge_{i=1}^{|\stru\strv|} \ind{ \text{$\stru\strv_{\le i}\str \models \phi$ for some $\str$} } \right) \land \ind{\stru\strv \models \phi}  \\
&= \true.
\end{align}
If there is no such $\strv$, then for all $\sym$, we have 
$\stru\sym \not\models \pref(\phi)$, so $\stru \not\models \nextsym{\sym}(\pref(\phi))$; moreover, $\stru \not\models \phi$. 
Let $\statevec = \Phi'(\stru)_{|\stru|}$ be the state after reading $\stru$.
All entries of $\statevec$ are false, which makes (for example) $a(\statevec)(\eos)$ true,
so the $\strv=\epsilon$ term of the summation in \cref{eq:summation} is true.

\labelcref{thm:altl_to_ltl} Let $\autoregressor = (\Phi, \arout)$ be a $\TL[\mathcal{O}]$ autoregressor, where $\Phi = (\phi_i)_{i=1}^m$.

Define
\begin{align*}
    \phi_\statevec &= \bigwedge_{i=1}^m (\phi_i \leftrightarrow \statescl_i) && \text{for $\statevec \in \B^m$} \\
    \phi_\sym &= \bigwedge_{\statevec \in \B^m} (\phi_\statevec  \leftrightarrow \arout(\statevec)(\sym)) &&\text{for $\sym$ in $\alphabet\cup\{\eos\}$.}
\end{align*}
Intuitively, $\phi_h$ is true whenever the \strtostate{} is in state $\statevec$, and $\phi_\sym$ is true whenever the \strtostate{} is in a state in which $\arout$ predicts $\sym$. 

Then, define
\[\phi'=\tlH\left(\bigvee_{\sym\in\alphabet\cup\{\eos\}} (\tlY\phi_\sym) \land \sympred{\sym} \right)\]
and let $c(\statescl)=\statescl$, so that the classifier $C = (\phi', c)$ defines the same language as $\autoregressor$.

\end{proof}

\subsection{Proof of \Cref{cor:ltl_tails}}\label{sec:ltl_tails_proof}

\tails*

\begin{proof}
\labelcref{subthm:ltl_tails_HY}
Suppose that $\pref'$ exists. For any formula $\phi$ of $\TL[\tlH,\tlY]$, we can test whether $\phi$ is satisfiable by constructing $\pref'(\phi)$ in polynomial time (by assumption) and then testing whether $\epsilon \models \pref'(\phi)$, which can also be done in polynomial time, as shown by \citet[Thm.~8]{FiondaGreco2016}. 
(They assume formulas in negation normal form, but it is easy to generalize their result to formulas not in negation normal form.)
But satisfiability in $\TL[\tlH,\tlY]$ is $\PSPACE$-complete \citep{DeGiacomoVardi2013,FiondaGreco2016}, so this would imply $\PTIME=\PSPACE$.

\labelcref{subthm:ltl_tails_H} Similarly, satisfiability in $\TL[\tlH]$ is $\NPTIME$-complete \citep{FiondaGreco2016}, so the existence of a polynomial-time $\pref'$ would imply $\PTIME=\NPTIME$.

\labelcref{subthm:ltl_tails_Y} Same as the previous case.

\end{proof}

\section{Nondeterministic Finite Automata}
\label{sec:nfa_proof}

We give a single definition of weighted NFAs instead of factoring them into unweighted NFAs and autoregressive output functions.

\begin{definition}[Weighted Nondeterministic Finite Automaton]
    A \defn{weighted nondeterministic finite automaton} is a tuple $\automaton=(\alphabet,Q, \delta, \initstate, \omega)$, where
    \begin{itemize}
        \item $\alphabet$ is an alphabet
        \item $\states$ is a finite set of \defn{states}
        \item $\trans \colon \states \times \alphabet \times \states \to \semiring$ is a \defn{transition function}
        \item $\initstate \in \states$ is the \defn{initial} state
        \item $\omega\colon Q\to\semiring$ is the \defn{accept function}.
    \end{itemize}

     We extend $\delta$ to $\delta^*\colon Q\times \Sigma^*\times Q\to\semiring$:
    \begin{align*}
        \delta^*(q,\epsilon,q)&=\one\\
        \delta^*(q,\epsilon,q')&=\zero \qquad q \ne q' \\
        \delta^*(q_1, \sym \str,q_2)&= \bigoplus_{q\in Q}\delta(q_1,\sym,q)\otimes\delta^*(q,\str,q_2).
    \end{align*}
    Then $\automaton$ accepts $\str$ with weight $k$ iff
    \[ k = \bigoplus_{q_2\in Q} \delta^*(\initstate,\str,q_2)\otimes\omega(q_2).\]
\end{definition}

\begin{definition}
We say that an NFA with transition function $\trans$ is \defn{counter-free} if there exists some $k$ such that for all states $q_1, q_2$ and all strings~$\str$, we have $\trans^*(q_1,\str^{k},q_2)=\trans^*(q_1,\str^{k+1},q_2)$.
\end{definition}

The equivalence between counter-free NFAs and DFAs is well-known \citep[e.g.,][Chapter 5, Exercise 15]{mcnaughtonpapert1971}, but we spell out the proof here using our definitions.
\begin{proposition}
With Boolean weights, counter-free NFAs and counter-free DFAs are equivalent.  
\end{proposition}
\begin{proof}
First, any Boolean-weighted counter-free NFA $\automaton=(\alphabet,Q, \delta, \initstate, \omega)$ can be converted into an equivalent counter-free DFA $\automaton'=(\alphabet,Q', \delta', \initstate')$ together with a classifier output function $\omega' \colon Q' \to \mathbb{B}$. 
This is done by the standard construction \citep{sipser-2013}, letting $Q'=2^Q$ and defining ${\delta'}^*(\bar{q}, \str) = \{r \mid q \in \bar{q},\delta^*(q, \str, r) = \one\}$, $\initstate'=\{\initstate\}$, and $\omega'(\bar{q})=\bigoplus_{q\in\bar{q}}\omega(q)$. 
Then $\trans'^*(\bar{q}, \str^k) = \{r \mid \exists q \in \bar{q}. \delta^*(q, \str^k, r) = \one\} = \{r \mid \exists q \in \bar{q}. \delta^*(q, \str^{k+1}, r) = \one\} = \trans'^*(\bar{q}, \str^{k+1})$.
Thus, $\automaton'$ is also counter-free. 
The other direction is trivial.
\end{proof}

A weighted NFA is determinizable if every pair of states which are \emph{siblings} (can be reached by the same string) are also \emph{twins} (all cycles by the same string have the same weight) \citep{mohri-1997-finite}.
The automaton in \cref{fig:automata-examples}c is counter-free, but not determinizable, because $\stateq_1$ and $\stateq_2$ are siblings (both reachable by $\syma$) but not twins (the $a$-labeled cycles $\edge{\stateq_1}{\syma}{\frac{1}{2}}{\stateq_1}$ and $\edge{\stateq_2}{\syma}{\frac{3}{4}}{\stateq_2}$ on the two states have different weights).

\section{Inexpressibility of $(aab)^*$}\label{sec:aab_inexpressivity}

\aab*

We actually prove a slightly stronger statement.
Define the \defn{$\tlY$-depth} of a formula $\phi$ to be the number of nested $\tlY$ operators in $\phi$.
Then we will prove that $(aab)^*$ is not definable by any formula of $\TL[\tlH,\tlY]$ with $\tlY$-depth~$1$. Since the conversion from an autoregressor to a classifier (\cref{thm:ltl_equiv_altl}\labelcref{thm:altl_to_ltl}) adds a single $\tlY$, we will conclude that $(aab)^*$ is not definable by any $\TL[\tlH]$ autoregressor.

\newcommand{\noy}{\textnormal{noy}}
\newcommand{\bigram}{\textnormal{Bigram}}

\begin{lemma} \label{thm:noy}
For any language $L$ over $\Sigma$, define $\bigram(L) = \{ (\bos, w_1) \cdot (w_1, w_2) \cdot (w_2, w_3) \cdots (w_{n-1}, w_{n}) \cdot (w_n, \eos) \mid w \in L \}$.
If $\phi$ is a formula of $\TL[\tlH,\tlY]$ with $\tlY$-depth 1, then there is a formula $\noy(\phi)$ of $\TL[\tlH]$ over $(\alphabet\cup\{\bos\})\times (\alphabet\cup\{\eos\})$ such that $\mathcal{L}(\noy(\phi)) \cap \bigram(\alphabet^*) = \bigram(\mathcal{L}(\phi))$.
\end{lemma}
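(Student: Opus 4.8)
The plan is to build $\noy(\phi)$ by structural recursion on $\phi$, exploiting the single piece of information the bigram encoding adds over the original string: the symbol at position $p$ of $\bigram(\str)$, namely the pair $(w_{p-1},w_p)$, records simultaneously the current symbol $w_p$ in its second component and the immediately preceding symbol $w_{p-1}$ in its first component. I will work with the correspondence that position $p$ of the bigram string plays the role of position $p$ of the original string, so that ``$\sympred{\sym}$ holds at position $p$'' becomes ``the second component of the current pair is $\sym$,'' written $\sympred{(\cdot,\sym)} \defeq \bigvee_{\sym'} \sympred{(\sym',\sym)}$, while ``$\bos$'' stays ``$\bos$.'' The crucial point is that, because the first component is available locally at every position, a depth-one $\tlY$ can be read off directly rather than requiring a step backward in time.

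Concretely, I will define two mutually recursive translations. The main one, $T$, carries the invariant that $T(\chi)$ holds at position $p$ of $\bigram(\str)$ iff $\str,p\models\chi$; it fixes the base cases as above, commutes with the Boolean connectives, and sends $\tlH\chi$ to a guarded $\tlH\bigl(\lnot\sympred{\bos}\rightarrow T(\chi)\bigr)$ so that the quantification ranges over exactly the original positions. The auxiliary translation $G$, applied only to the argument of a $\tlY$, carries the shifted invariant that $G(\chi)$ holds at position $p$ iff $\str,p-1\models\chi$; here the base case $\sympred{\sym}$ is read off the \emph{first} component, $\sympred{(\sym,\cdot)}\defeq\bigvee_{\sym'}\sympred{(\sym,\sym')}$, and $\tlH$ is again translated to a guarded $\tlH$. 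The translation of a $\tlY$ is then $T(\tlY\psi)=\lnot\sympred{\bos}\wedge G(\psi)$. Because $\phi$ has $\tlY$-depth $1$, every argument $\psi$ of a $\tlY$ lies in $\TL[\tlH]$, so $G$ never needs to translate a nested $\tlY$; this is exactly where the depth hypothesis is used, and it is what keeps both translations inside $\TL[\tlH]$.

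I would then prove the two invariants simultaneously by induction on the structure of $\chi$, the interesting cases being $\tlH$ (where the $\lnot\sympred{\bos}$ guard matches the range $j\le p$ against the semantics in \cref{eq:models_H}) and the first-component base cases of $G$ (which hold precisely because the first component of the pair at position $p$ equals $w_{p-1}$, matching \cref{eq:models_Y}). The remaining cases — negation, conjunction, and the $\bos$ predicate — are immediate from the corresponding clauses in the definition of $\models$.

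The last step is to assemble $\noy(\phi)$ at the evaluation position and to reconcile the fact that $\bigram(\str)$ is one symbol longer than $\str$, its final symbol being the phantom pair $(w_n,\eos)$. I will guard every $\tlH$ with ``the second component is not $\eos$'' so that this phantom position is transparent to all quantifications, and read the genuine last symbol $w_n$ off the \emph{first} component of $(w_n,\eos)$; the intersection with $\bigram(\alphabet^*)$ in the statement is what lets me assume the input has exactly this shape and leave the behavior of $\noy(\phi)$ on ill-formed strings unconstrained. I expect this boundary bookkeeping — aligning the length-$(n{+}1)$ bigram string and its trailing $\eos$ position with the length-$n$ original, and making sure the single outermost $\tlY$ and the phantom position are handled without reintroducing a $\tlY$ — to be the main obstacle; by contrast, every $\tlY$ occurring strictly inside a $\tlH$ is harmless, since at each quantified position the adjacent pair already supplies the previous symbol.
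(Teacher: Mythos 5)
Your two-track recursion is, at its core, the same construction as the paper's: your aligned track $T$ (atoms read the second component; Boolean connectives and $\tlH$ commute) matches the paper's clauses for subformulas not under a $\tlY$, your shifted track $G$ (atoms read the first component) matches its clauses for subformulas under a $\tlY$, and the depth-$1$ hypothesis is used identically, to guarantee that $G$ never meets a $\tlY$. The genuine gap is the step you defer as ``the main obstacle'': an \emph{outermost} $\tlY$ --- one not in the scope of any $\tlH$ --- at the evaluation position. This is not boundary bookkeeping; it cannot be done at all. The last position of $\mathrm{Bigram}(\str)$ carries the pair $(w_n,\eos)$, hence supplies $w_n$ but not $w_{n-1}$, while a top-level $\tlY\psi$ needs the truth of $\psi$ at position $n-1$, a shift by \emph{two} that no pair supplies locally and that $\TL[\tlH]$ cannot recover globally. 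Concretely, take $\alphabet=\{b,c\}$ and $\phi=(b\land\tlY b)\lor(c\land\tlY c)$, of $\tlY$-depth $1$, defining ``the last two symbols are equal.'' Suppose some $\theta\in\TL[\tlH]$ satisfied $\mathcal{L}(\theta)\cap\mathrm{Bigram}(\kleene{\alphabet})=\mathrm{Bigram}(\mathcal{L}(\phi))$; rewrite $\theta$ using $\tlP$ and let $r$ be the number of its $\tlP$-subformulas. For $N>r+2$, the encodings of $(bbcc)^{N}cc$ and $(bbcc)^{N}bc$ share a prefix containing $N-1$ repetitions of the block $(c,b)(b,b)(b,c)(c,c)$. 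The set of true $\tlP$-subformulas can only grow along a string, so it stabilizes within that common prefix; once stable, reading any of those four pairs changes nothing, and the two remaining suffixes, $(c,c)(c,c)(c,\eos)$ and $(c,b)(b,c)(c,\eos)$, consist of exactly such pairs followed by the same final pair. Hence $\theta$ takes the same value on both encodings, yet only the first lies in $\mathrm{Bigram}(\mathcal{L}(\phi))$ --- a contradiction. So the step you left open is unfillable as stated: the result holds only under a restriction such as ``every $\tlY$ occurs inside some $\tlH$,'' which is precisely the case your closing remark identifies as harmless, and the only case the paper's application (to formulas produced by the autoregressor-to-classifier conversion, whose $\tlY$'s all sit under an outer $\tlH$) actually requires.

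A secondary, fixable, problem is that your guards do not establish the invariants you state. By \cref{eq:models_H}, $\tlH$ quantifies over position $0$, and position $0$ of the bigram string already simulates position $0$ of $\str$ faithfully (all pair atoms false, $\bos$ true); so the $\lnot\bos$ guard must \emph{not} appear on the aligned track. With it, $\tlH\,a$ --- which is unsatisfiable, since its argument fails at position $0$ --- translates to a formula that is true of $\mathrm{Bigram}(aa)$, so your stated invariant for $T$ fails, and on a well-formed bigram string. That guard belongs only on the shifted track, where it excises the bigram position that would play the role of original position $-1$ (for calibration: this is a guard the paper's own construction omits). Dually, the $\eos$-guard is needed on outermost $\tlH$'s to hide the phantom final position, but it must not be applied to the shifted track at the last position, since the final pair $(w_n,\eos)$ is exactly what carries original position $n$ there. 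In short, your architecture and your instinct that the boundaries are delicate are both sound, but the guards must be placed per track rather than blanket, and the one step you postponed is not a detail: it is the point at which the claimed statement itself breaks down.
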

\begin{proof}
Define the transformation $\noy$, which pushes $\tlY$ down to the atomic formulas, then modifies the atomic formulas to operate on bigrams.
\begin{align*}
\noy(\neg \psi) &= \neg \noy(\psi) &
\noy(\tlY (\neg \psi)) &= \neg \noy(\tlY \psi) \\
\noy(\psi_1 \land \psi_2) &= \noy(\psi_1) \land \noy(\psi_2) &
\noy(\tlY (\psi_1 \land \psi_2)) &= \noy(\tlY \psi_1) \land \noy(\tlY \psi_2) \\
\noy(\tlH \psi) &= \tlH (\noy(\psi)) &
\noy(\tlY (\tlH \psi)) &= \tlH(\noy(\tlY \psi)) \\
\noy(\sympred{\sym}) &= \bigvee_{\sym'\in\alphabet\cup\{\bos\}}\sympred{(\sym',\sym)} &
\noy(\tlY \sympred{\sym}) &= \bigvee_{\sym'\in\alphabet\cup\{\eos\}}\sympred{(\sym,\sym')} \\
\noy(\sympred{\bos}) &= \sympred{\bos} &
\noy(\tlY\,\sympred{\bos}) &= \bigvee_{\sym'\in\alphabet\cup\{\eos\}}\sympred{(\bos,\sym')}. \tag*{\qedhere}
\end{align*}
\end{proof}
Then, to prove that $(aab)^*$ is not definable in $\TL[\tlH,\tlY]$ with $\tlY$-depth 1, suppose it is definable by $\phi$. By \cref{thm:noy}, there is a formula $\noy(\phi)$ of $\TL[\tlH]$ such that \[(\bos, a)\cdot(a,a)\cdot(a,b) \cdot (b,\eos) \in \mathcal{L}(\noy(\phi)) \cap \bigram(\alphabet^*) .\]
But $\mathcal{L}(\noy(\phi))$ must be stutter-invariant \citep{peled1997stutter}, so we also have
 \[(\bos, a)\cdot(a,a)\cdot(a,a)\cdot(a,b) \cdot (b,\eos) \in \mathcal{L}(\noy(\phi)) \cap \bigram(\alphabet^*) .\]
 But this is not in $\bigram((aab)^*)$, which is a contradiction.

\end{document}